\documentclass{article}

\usepackage{microtype}
\usepackage{graphicx}
\usepackage{subcaption}
\usepackage{booktabs} 
\usepackage{bbm}

\usepackage{hyperref}

\usepackage[accepted]{icml2026}

\usepackage{amsmath}
\usepackage{amssymb}
\usepackage{mathtools}
\usepackage{amsthm}

\usepackage[capitalize,noabbrev]{cleveref}

\theoremstyle{plain}
\newtheorem{theorem}{Theorem}[section]
\newtheorem{proposition}[theorem]{Proposition}

\newtheorem{corollary}[theorem]{Corollary}
\theoremstyle{definition}
\newtheorem{definition}[theorem]{Definition}

\theoremstyle{remark}

\usepackage[textsize=tiny]{todonotes}

\newcommand{\bp}{\mathbf{p}}

\newcommand{\bx}{\mathbf{x}}
\newcommand{\by}{\mathbf{y}}

\newcommand{\calA}{{\mathcal{A}}}

\newcommand{\calP}{{\mathcal{P}}}

\theoremstyle{plain}

\theoremstyle{definition}

\theoremstyle{remark}
\newtheorem*{rem}{Remark}

\DeclareMathOperator*{\argmax}{arg\,max}
\DeclareMathOperator*{\argmin}{arg\,min}

\newcommand{\iidsim}{\overset{\mathrm{i.i.d.}}{\sim}}

\def\[#1\]{\begin{equation}\begin{aligned}#1\end{aligned}\end{equation}}

\newcommand{\neuripsbooktitle}[1]{%
  \ifnum#1<2018%
    Advances in Neural Information Processing Systems \the\numexpr#1-1987\relax\ (NIPS #1)%
  \else%
    Advances in Neural Information Processing Systems \the\numexpr#1-1987\relax\ (NeurIPS #1)%
  \fi%
}

\newcommand{\ordinalsuffix}[1]{%
  \ifcase#1 th
  \or st
  \or nd
  \or rd
  \else th
  \fi
}

\newcommand{\icmlbooktitle}[1]{%
  Proceedings of The \the\numexpr#1-1983\relax%
  \ordinalsuffix{\the\numexpr(#1-1983)\mod 10\relax} %
  International Conference on Machine Learning (ICML #1)%
}

\newcommand{\aistatsbooktitle}[1]{%
  Proceedings of The \the\numexpr#1-1997\relax%
  \ordinalsuffix{\the\numexpr(#1-1997)\mod 10\relax} %
  International Conference on Artificial Intelligence and Statistics (AISTATS #1)%
}

\newcommand{\iclrbooktitle}[1]{%
  Proceedings of The \the\numexpr#1-2012\relax%
  \ordinalsuffix{\the\numexpr(#1-2012)\mod 10\relax} %
  International Conference on Learning Representations (ICLR #1)%
}

\icmltitlerunning{From Drift to Coherence: Stabilizing Beliefs in LLMs}

\usepackage{enumitem}
\usepackage{multirow}
\usepackage{multicol}
\usepackage{tcolorbox}
\usepackage{fancyvrb}
\usepackage{xcolor}
\tcbuselibrary{breakable}
\usepackage{icml2026}

\begin{document}

\twocolumn[
  \icmltitle{From Drift to Coherence: Stabilizing Beliefs in LLMs}



  \icmlsetsymbol{equal}{*}

  \begin{icmlauthorlist}
    \icmlauthor{SongEun Kim}{snu}
    \icmlauthor{Seungyoo Lee}{kaist}
    \icmlauthor{Edwin Fong}{hku}
    \icmlauthor{Hyungi Lee$^\dagger$}{kmu}
    \icmlauthor{Juho Lee$^\dagger$}{kaist}

  \end{icmlauthorlist}

  \icmlaffiliation{snu}{Department of Statistics, Seoul National University}
  \icmlaffiliation{kaist}{Korea Advanced Institute of Science \& Technology}
  \icmlaffiliation{kmu}{Department of AI, Kookmin University}
  \icmlaffiliation{hku}{The University of Hong Kong}

  \icmlcorrespondingauthor{Hyungi Lee}{lhk2708@kookmin.ac.kr}
  \icmlcorrespondingauthor{Juho Lee}{juholee@kaist.ac.kr}

  \icmlkeywords{Machine Learning, ICML}

  \vskip 0.3in
]



\printAffiliationsAndNotice{$^\dagger$Equal Correspondence}  

\begin{abstract}
Large language models (LLMs) are often hypothesized to perform implicit Bayesian inference, yet a key coherence condition—the martingale property of predictive beliefs—has been shown to fail in controlled synthetic in-context learning settings. We revisit this question in a more typical usage regime: generic multiple-choice question answering. Exploiting the discrete answer space, we compute exact predictive distributions and study belief dynamics induced by autoregressive answer resampling. We introduce prompted predictive resampling (PPR), where an LLM generates a sequence of answers to the same question. Empirically, PPR reveals early-stage belief drift, indicating martingale violations. However, after sufficient resampling steps, the belief process self-stabilizes and converges to a coherent predictive distribution. Based on this observation, we further propose (i) a seed-answer prompting strategy to accelerate stabilization, and (ii) a self-consistency loss that amortizes early-stage drift into the model via fine-tuning. Experiments on multiple-choice QA benchmarks show that our methods substantially reduce belief drift and improve predictive coherence without sacrificing accuracy.
\end{abstract}

\section{Introduction}

Among the various capabilities of large language models~\citep[LLMs;][]{brown2020, guo2025deepseek, singh2025openai, comanici2025gemini}, a particularly striking one is their ability to rapidly adapt or generalize from only a small number of examples across a wide range of tasks. A canonical manifestation of this behavior is in-context learning (ICL), in which an LLM is prompted with a sequence of example input–output pairs and then asked to produce an output for a new input, without any parameter updates or explicit fine-tuning~\citep{brown2020,dong2024survey}, although they were not explicitly trained to do that.

There has been a growing body of work seeking to explain ICL, and more broadly the rapid adaptability of large language models, through the lens of Bayesian inference~\citep{xie2022,jiang2023,wang2024,ye2024,falck2024}. A common line of reasoning in this literature is to demonstrate—either in analytically simplified settings or via empirical analyses—that LLMs implicitly infer a latent concept or knowledge variable $\theta$ that summarizes the in-context examples, and subsequently generate predictions for new queries by conditioning on $\theta$~\citep{xie2022,jiang2023,wang2024}. If true, this perspective would render LLMs as Bayesian predictives arising from an implicitly defined prior–likelihood pair over the latent variable $\theta$.

The question of whether LLMs perform implicit Bayesian inference is part of a more general inquiry: namely, whether autoregressive generative models—or more broadly, \emph{one-step predictive sequences}—trained without an explicit latent variable $\theta$ can nevertheless be associated with such a $\theta$, at least implicitly. This question is closely related to the \emph{predictive modelling paradigm} in Bayesian statistics~\citep{fong2023,fortini2025}, which seeks to formulate Bayesian inference purely in terms of predictive sequences. According to this theory, if a predictive sequence is exchangeable, one can prove the existence of an underlying latent variable $\theta$ equipped with a proper prior–likelihood pair. Moreover, if the predictive sequence is \emph{conditionally identically distributed (c.i.d.)}~\citep{berti2004}, or equivalently satisfies a martingale property~\citep{fong2023}, then there exists a latent variable $\theta$ that captures epistemic uncertainty through the so-called \emph{martingale posterior}, which generalizes the classical Bayesian posterior. Leveraging this theoretical connection, \citet{falck2024} empirically demonstrated violations of the martingale property in LLMs on several synthetic and toy tasks, and consequently concluded that LLMs cannot be regarded as Bayesian in general. However, their analysis is primarily illustrative and is therefore restricted to relatively artificial tasks (e.g., random number generation), which limits its direct relevance to the practical tasks for which LLMs are typically deployed.

In this paper, we extend the martingale perspective on LLMs introduced by \citet{falck2024} to a broader class of tasks—going beyond ICL and synthetic settings tailored for theoretical alignment—and study the most generic, and arguably most common, mode of LLM usage: responding to a query $Q$ or a prompt centered around it. To this end, we introduce a \emph{prompted predictive resampling} scheme, in which an LLM is asked to answer the same question multiple times \emph{in sequence}, rather than as independent samples. At first glance, this procedure may appear unnatural or even uninformative, as it departs from standard LLM usage and may be perceived as an artificial attempt to force LLM responses into predictive-resampling from martingale posteriors~\citep{fong2023}. 

Empirically, however, we observe that while the probabilities over possible answers computed by an LLM may drift in the short run, they stabilize after a sufficient number of steps into a coherent sequence of probabilities. This stabilization effectively renders the LLM a coherent predictive machine equipped with a martingale posterior. We further propose (i) a simple prompting-based trick to accelerate this stabilization and (ii) a fine-tuning strategy that amortizes the required number of resampling steps, derived from a loss enforcing the c.i.d. condition. Experiments on standard question-answering benchmarks show that, once stabilized—either via predictive resampling or amortization—the resulting predictive distributions exhibit improved uncertainty calibration without sacrificing accuracy.



\section{Preliminaries} 
\label{sec:preliminary}
\subsection{Problem Formulation}

A growing body of recent work~\citep{xie2022,jiang2023,falck2024,chlon2025llms} has sought to interpret LLMs as approximate Bayesian learners, particularly through the lens of ICL. In this line of research, a central question is whether the predictive behavior of an LLM can be understood as coherent Bayesian updating under a latent prior. A prominent recent analysis by \citet{falck2024} formalizes this question via the martingale property, a necessary condition for Bayesian learning under exchangeable data, and shows that modern LLMs violate it in controlled synthetic experiments with Bernoulli and Normal posteriors.

While such synthetic tasks with continuous priors enable a direct comparison against a theoretical Bayesian oracle, they introduce a limitation for our purposes: they obscure the model's actual predictive distribution. In particular, when predictions are accessed only through samples, one cannot directly observe or manipulate the true parameterized predictive density, making it difficult to reason about internal belief dynamics rather than sampling artifacts.

In contrast, we exploit the fact that LLMs are fundamentally discrete autoregressive density estimators defined over a finite vocabulary. We therefore restrict the output space to a finite, semantically meaningful answer set $\mathcal{A}$ (e.g., multiple-choice options). This ``native'' discretization is not merely a modeling convenience, but provides two key advantages that are essential for our analysis.

\begin{enumerate}
    \item \textbf{Direct access to internal beliefs.}  
    The model’s predictive distribution over $\mathcal{A}$ can be computed exactly from logits computed the model, without relying on sampling-based approximations or kernel density estimation. This allows us to work directly with the model’s internal belief state.
    
    \item \textbf{A self-consistency criterion.}  
    Our focus is not on whether the model recovers an external ground-truth likelihood, which may not even be well-defined for generic question answering, but on whether the model’s own predictive process is internally coherent over time. In particular, we can directly assess whether successive predictive distributions satisfy a martingale property with respect to the model’s own generated history.
\end{enumerate}

Building on this discrete formulation, let $p_\phi$ denote a language model parameterized by static weights $\phi$. For a fixed query $Q$, we model answer generation not as a single-shot prediction, but as a stochastic process producing \emph{a sequence of answers} $A_{1:n}$, autoregressively sampled from the finite answer space $\mathcal{A}$.

To account for different prompting and steering strategies, ranging from standard querying to the seed-based interventions studied later, we introduce the notion of an \emph{initial context} $\mathbf{x}_Q$. The context $\mathbf{x}_Q$ represents the specific realization of the prompt provided to the model, which may consist solely of the query $Q$ or an augmented prompt including additional prefixes.

For a fixed context $\mathbf{x}_Q$, the model induces a history-dependent predictive distribution, which we parameterize by
\begin{equation}
    \theta_n(\mathbf{x}_Q) := p_\phi(A_{n+1} = \cdot \mid A_{1:n}, \mathbf{x}_Q) \in \mathcal{P}(\mathcal{A}).
\end{equation}
We take $\theta_n$ as our primary object of interest. Unlike the static model parameters $\phi$, $\theta_n$ is a random variable whose evolution is driven by the model’s \emph{own generated answers}. As $n$ increases, $\theta_n$ traces a stochastic trajectory in the probability simplex $\mathcal{P}(\mathcal{A})$, capturing how the model’s internal beliefs evolve as it conditions on its own outputs.

This perspective allows us to study Bayesian coherence at the level of predictive belief dynamics, rather than at the level of external performance or calibration. In the following section, we formalize this intuition by introducing martingale conditions on the sequence $\{\theta_n\}_{n \ge 1}$ and relating them to Bayesian learning principles.

Throughout the paper, conditioning on $A_{1:n}$ is understood as conditioning on the $\sigma$-field generated by the answer history. In particular, the predictive process $\{\theta_n\}_{n \in \mathbb{N}}$ is an adapted process on filtration $\{\sigma(A_{1:n})\}_{n \in \mathbb{N}}$.


\subsection{The Martingale Property and Posterior}  

Our framework builds upon the martingale posterior perspective \citep{fong2023} to quantify uncertainty without relying on the exchangeability assumption. Following the definition of \textit{the martingale property} in \citep{fong2023,falck2024}, we condition exclusively on the sequence of answers generated by the language model itself to fit the autoregressive nature of language models strictly.

\begin{definition}[The Martingale Property]
\label{defmp}
    The predictive process $\{\theta_n(\mathbf{x}_Q)\}_{n \in \mathbb{N}}$ satisfies the martingale property if it forms a vector-valued martingale with respect to the filtration $\mathcal{F}_n = \sigma(\mathbf{x}_Q, A_{1:n})$. That is, for all $n, k \in \mathbb{N}$: 
    \begin{equation}
        \mathbb{E}[\theta_{n+k}(\mathbf{x}_Q) \mid \mathcal{F}_n] = \theta_n(\mathbf{x}_Q) \quad \text{a.s.}
    \end{equation}
\end{definition}   
This definition equivalently implies that the generated sequence $A_{1:\infty}$ is \textit{conditionally identically distributed}~\citep[c.i.d.;][]{berti2004}, meaning the marginal predictive distribution is invariant to the timing of the future observations (see Proposition \ref{prop:a_1} for the proof). Intuitively, this asserts that the model's current belief $\theta_n$ is stable and does not shift systematically. 

Beyond ensuring instantaneous stability, the martingale property rigorously guarantees the existence of a valid limiting distribution. Since the belief process $\{\theta_n(\mathbf{x}_Q)\}_{n \in \mathbb{N}}$ is bounded within the probability simplex $\mathcal{P}(\mathcal{A})$, Doob's Theorem~\citep{doob1949application} ensures that $\theta_n(\mathbf{x}_Q)$ converges almost surely to a limiting random variable $\theta_\infty(\mathbf{x}_Q)$ as $n \rightarrow \infty$. 

This limit $\theta_\infty$ represents the model's ``final'' epistemic belief state. We therefore define \textbf{the martingale posterior} as the probability distribution of this limit, conditioned on the initial context: 
\begin{equation} \Pi_{\infty}(\cdot \mid \mathbf{x}_Q) := \text{Law}(\theta_\infty \mid \mathbf{x}_Q), 
\end{equation}
where $\mathrm{Law}(\cdot \mid \mathbf{x}_Q)$ denotes the induced conditional probability measure on the simplex $\mathcal{P}(\mathcal{A})$.
By guaranteeing the existence of this limit, the martingale property allows us to transform the evolving autoregressive sequence into a coherent, static probability measure over the simplex, enabling direct uncertainty quantification.

\section{Empirical Observations}
\label{sec:obsv}

\begin{figure}[t] 
\centering
    \begin{subfigure}[b]{0.48\columnwidth}
    \centering\includegraphics[width=1.0\textwidth]{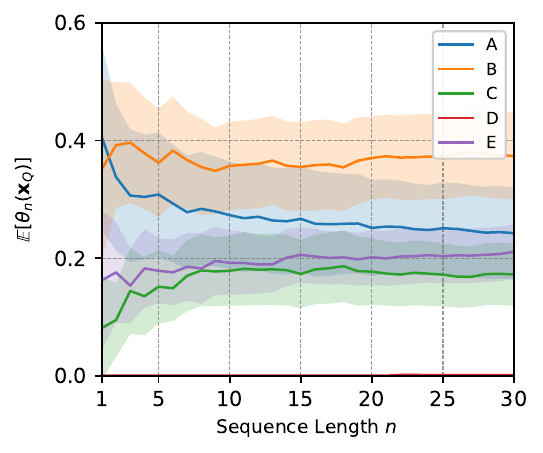}
    \caption{Without seed answers}
    \label{fig:base_model}
    \end{subfigure}
    \hfill
    \begin{subfigure}[b]{0.48\columnwidth}
    \centering\includegraphics[width=1.0\textwidth]{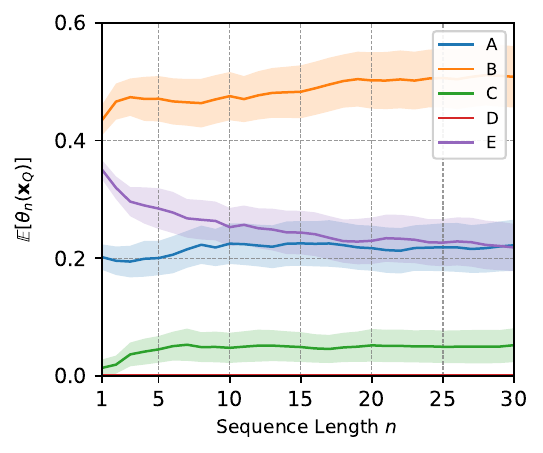}
    \caption{With seed answers}
    \label{fig:steered_model}
    \end{subfigure}
    
    \vspace{0.1cm}
    \caption{\textbf{Analysis of martingale property violation in prompted predictive resampling.} We instructed \textsc{LLAMA-3.1-8B} to generate a sequence of answers on the CSQA question. Solid lines indicate the mean across independent sample paths, and shaded regions represent $\pm$1 standard deviation. \textbf{(a)} A significant drift during the early generated sequences was observed, indicating the martingale violation. \textbf{(b)} Conditioning on a prefix of 5 random i.i.d. sequences effectively mitigates the drift.}
    \vspace{-15pt}
    \label{fig:observation}
\end{figure}

While \citet{falck2024} has largely focused on synthetic and controlled settings with simple Bernoulli and Normal distributions, it remains unclear to what extent this property manifests in practical large-scale reasoning tasks encountered by modern LLMs. To address this gap, we evaluate the martingale behavior of the instruction-tuned LLM, \textsc{Llama-3.1-8B}~\citep{grattafiori2024llama}, on a real-world commonsense reasoning benchmark, \textsc{CommonsenseQA}~\citep[CSQA;][]{talmor2019commonsenseqa}. In this setting, we observe two prominent phenomena: (i) when the predictive distribution is conditioned solely on the question $Q$, the model exhibits a pronounced initial drift in its marginal predictive beliefs, (ii) when the conditioning context is augmented with a short sequence of \emph{seed answers} drawn independent and identically distributed (i.i.d.) from the model without predictive resampling (i.e., the usual way to get answer from the model), this early-stage drift is effectively suppressed. 

\paragraph{Initial Drift as a Violation of Necessary Conditions.}
We first examine the marginal distribution of the 
$n$th answer token generated under predictive resampling. To this end, we augment the query 
$Q$ with an explicit instruction that allows the model to generate answers repeatedly; the corresponding prompting strategy is detailed in \cref{sec:methods}. Let $\mathbf{x}_Q$ denote the resulting prompt. The marginal distribution of $A_{n+1}$, denoted with our quantity of interest $\theta_n(\mathbf{x}_Q)$, is given by
\[
    \mathbb{E}[\theta_n(\mathbf{x}_Q)] &:= \mathbb{E}_{A_{1:n} \sim p_\phi}[p_\phi(A_{n+1} = \cdot \mid A_{1:n}, \mathbf{x}_Q)] \\ &= p_\phi(A_{n+1} = \cdot \mid \mathbf{x}_Q).
\]
In \cref{fig:base_model}, we visualize the evolution of these marginal distributions across sampling steps. A pronounced distributional shift is evident during the initial phase, approximately for $n \leq 5$, after which the marginals begin to stabilize. This behavior is particularly informative: stabilization of the marginal expectation is a necessary condition for the martingale property (Definition~\ref{defmp}). If the average predictive distribution drifts over time, the sequence cannot be conditionally identically distributed. Consequently, the observed early-stage drift constitutes direct evidence of martingale property violation in the language model during the initial sampling steps.

\paragraph{Self-Stabilization.} 
However, \cref{fig:base_model} also reveals that this distribution \emph{spontaneously stabilizes as the sample length increases}, analogous to the burn-in phase of MCMC chains. While this marginal stabilization is not sufficient to prove the model has become a martingale, it indicates that the process eventually satisfies the minimum distributional requirement for one. This spontaneous equilibration suggests that, specifically within the context of generic query-answering tasks, the violation of the martingale property is largely transient.


\paragraph{Answer seeding for better stabilization.}
 Motivated by the observed early-stage drift, we investigate strategies to mitigate violations of the martingale property during the initial sampling steps. We find that supplying the model with additional contextual information can steer the generative process into a stable regime more rapidly. Concretely, we first generate a set of \emph{seed answers}, sampled i.i.d.\ from $p_\phi(\cdot\mid Q)$; that is, these answers are produced by a standard LLM invocation without any instruction for repeated answering. We then augment the prompt $\mathbf{x}_Q$ by appending both the predictive-resampling instruction and the seed answers. We refer to this strategy as \emph{answer seeding}. Empirically, answer seeding substantially reduces the early-stage distributional drift, leading to faster stabilization of the resulting \emph{seed-marginalized} predictive distribution, as illustrated in \cref{fig:steered_model}. 
 
 Taken together, these observations indicate that, in practical question-answering settings, the predictive distribution of an LLM admits regimes in which martingale consistent behavior can be attained. Further details on the experimental setup (e.g., prompt) are provided in Appendix \ref{appendix:c}.


\paragraph{Intuition on why self-stabilization happens.}
Although the mechanism behind stabilization is not fully transparent, a potential high-level intuition is that predictive resampling creates a feedback loop in which the model conditions on its own recent answers. Concretely, the sequence of past answers provides additional context about the model's current uncertainty over the candidate set: if the model's predictive distribution is already sharply concentrated, resampling tends to reproduce the same answer, whereas if multiple answers remain plausible, the resampled sequence will exhibit diversity. Conditioning on this realized history effectively supplies the model with a short summary of its own predictive variability, which can reduce transient fluctuations in the next-step predictive distribution. In this view, the apparent ``burn-in'' corresponds to the initial stage where only a few past answers are available and the induced context is too weak to constrain the predictive distribution; once sufficiently many answers have accumulated, the feedback becomes informative enough that the model's predictive probabilities stabilize and become approximately self-consistent across subsequent resampling steps. 

\paragraph{Intuition on answer seeding.} 
The steering effect of seed answers can be interpreted as follows. Suppose that, under the plain prompt $Q$ (i.e., without explicitly instructing predictive resampling), the model's answer sequence is c.i.d. with one-step predictive  $p_\phi(A_{n+1}\mid Q)$. Then the seed answers $S_1,\dots,S_m \iidsim p_\phi(A_{n+1}\mid Q)$ follow the same marginal distribution as each element of the subsequent resampling trajectory $(A_{n+1},A_{n+2},\dots)$ that would be generated from $p_\phi(\cdot\mid Q)$ (ignoring the additional resampling instruction needed). In this sense, the seeds provide a rough preview of what the model would likely produce in the next few steps: they approximate the expected empirical frequencies of future answers, while implicitly treating those future draws as independent and thus ignoring any correlations within $(A_{n+1},\dots,A_{n+m})$. 

Viewed this way, appending the seed answers to the prompt supplies the model with a crude ``mean-field'' summary of its own predictive uncertainty early on, which can reduce the amount of resampling needed before the predictive probabilities stabilize. Crucially, because these seeds are generated endogenously from the model's own distribution, this stochastic self-seeding provides a highly efficient route to a stabilized regime that remains firmly anchored to the model's original predictive behavior, rather than being driven toward an arbitrary exogenous context.

\section{Predictive Resampling and Amortization}  
\label{sec:methods}
Based on the observations in \cref{sec:obsv}, we find that the predictive distribution of an LLM admits regimes in which martingale-consistent behavior can be attained, despite exhibiting transient violations during the early stages of predictive resampling. In this section, we first describe \emph{Prompted Predictive Resampling} (PPR), a prompting strategy that induces self-stabilizing behavior in answer sequences, together with the answer-seeding mechanism introduced above. We then propose a post-hoc tuning method that amortizes the sampling steps required for stabilization by optimizing a loss that explicitly enforces the conditional identically distributed (c.i.d.) property.

\subsection{Prompted Predictive Resampling}
\paragraph{Resampling protocol into prompt.} 
To implement the predictive resampling for generic question answering tasks using LLMs, we require a generation protocol that encourages the model to externalize its internal uncertainty as a concrete sequence of answer tokens. This necessitates not only repeated sampling but also a structured output format that allows individual samples to be unambiguously identified and parsed.

Without such a structure, ambiguities can arise in autoregressive generation. For example, a sequence of identical answers (e.g., ``AA'') may be produced
either as two consecutive tokens ``A'' ``A'' or as a single token ``AA'', depending on the tokenizer and generation context. To eliminate this ambiguity, we explicitly instruct the model to generate exactly one answer token at a time, followed by a newline character
(\texttt{\textbackslash n}) as a deterministic delimiter between successive samples.

To ensure strict adherence to this output format and to prevent artificially structured outputs (e.g., cyclic or templated patterns), we include syntactic
examples (e.g., \texttt{...output might be : C\textbackslash nA\textbackslash nC\textbackslash nC\textbackslash nA\textbackslash nA\textbackslash n ...}) within the prompt. Crucially, these examples serve purely as structural guidance rather than semantic demonstrations. Our analysis confirms that the models do not interpret this setup as a standard few-shot learning scenario; instead, they use the examples solely to infer the required delimiter and layout, without conditioning their reasoning on the example content. 

Now, let $I$ be a textual instruction of the generation protocol specified above. We form a prompt $\mathbf{x}_Q$ by prepending $I$ to $Q$, i.e., $\mathbf{x}_Q := [I ; Q]$. PPR is then executed by generating answer sequences from $p_\phi(\cdot\mid \mathbf{x}_Q)$. The full prompt specification used in our experiments and examples of generated answer sequences are provided in Appendix \ref{appendix:c}.

\paragraph{Answer seeding.}
In answer seeding, in addition to the instruction $I$, we generate a predefined number of seed answers
$S_{1:m} := (S_1,\dots,S_m) \iidsim p_\phi(\cdot \mid Q)$, drawn i.i.d.\ from the standard LLM predictive distribution. We then construct the prompt as 
$
\tilde{\mathbf{x}}_Q^{(j)} = [I;Q;S_{1:m}^{(j)}].
$
While this construction renders the prompt $\tilde{\mathbf{x}}_Q$ a random variable through its dependence on
$S_{1:m}$, the seeded estimator ultimately averages over both rollout randomness and seed randomness. Therefore, it is best interpreted as a seed-marginalized quantity rather than one strictly conditioned on a single fixed seed prefix, allowing us to conceptually treat the overarching marginal process as deterministic conditional on $Q$.

\paragraph{Recovering target of interest.}
In our setup, we have direct access to the predictive probabilities, as $p_\phi(A_{n+1}\mid A_{1:n}, \mathbf{x}_Q)$ are simply computed from LLMs. Still, as we discussed earlier, we may choose a statistic or quantity of interest to be recovered from the sequence $A_{1:N}$. As in \citet{fong2023}, one can recover such a quantity by solving an optimization problem,
\[
\theta(A_{1:N}) := \argmin_{\theta} \, \int_{\calA} \ell(a, \theta) dF_N(a),
\]
where $\ell(a, \theta)$ is a loss function and $F(a)$ is the empirical distribution constructed with the sequence $A_{1:N}$. In our setup, we simply choose $\ell(a,\theta)$ to be the categorical log-likelihood, and the quantity of interest boils down to the Maximum-Likelihood Estimator (MLE),
\[
\bar\theta_N(A_{1:N}\mid\mathbf{x}_Q) := \argmax_{\theta \in \calP(\calA)}
\sum_{i=1}^N \log \mathrm{Cat}(A_i\mid \theta).
\]
In case of categorical likelihood, the MLE is simply computed as an empirical frequency vector in the sequence $A_{1:N}$. As $N \to \infty$, the random variable $\bar\theta_N(A_{1:N}\mid \mathbf{x}_Q)$ converges almost surely to the model's inherent belief $\theta_\infty$ distributed according to the martingale posterior $\Pi_\infty$ (see \cref{prop:mp-consistency} for the statement and proof). In practice, to simulate the martingale posterior over $\bar\theta_N(A_{1:N}\mid\mathbf{x}_Q)$, we draw $J$ independent answer sequences $A_{1:N}^{(j)}$ via PPR for $j =1,\dots, J$. As a single representative, we use those trajectories to compute a Monte-Carlo estimator of the posterior mean.
\[
q^*(\mathbf{x}_Q) := \mathbb{E}[\theta_\infty \mid \mathbf{x}_Q] 
\approx \frac{1}{J} \sum_{j=1}^J \bar\theta_N(A_{1:N}^{(j)}\mid \mathbf{x}_Q).
\]
This provides a tractable representative of the limiting predictive belief and enables direct evaluation of martingale consistency. Additionally, it coincides with the Bayes-optimal action under standard
proper scoring rules and squared-error loss (Brier Score~\citep{brier}). Formally, for such seeded prompts $\tilde{\mathbf{x}}_Q$, the law of total expectation guarantees that the expected value of the seed-conditional belief remains perfectly centered on the original unseeded target context $\mathbf{x}_Q = [I; Q]$, satisfying 
\[
\mathbb{E}_{S}\left[\mathbb{E}[\theta_\infty \mid \tilde{\mathbf{x}}_Q]\right] = \mathbb{E}[\theta_\infty \mid \mathbf{x}_Q].
\]
This mathematically solidifies our interpretation that stochastic self-seeding does not distort the core limiting identity of the model, but rather optimizes the trajectory toward a seed-marginalized family of context-conditional beliefs.

\subsection{Martingale Amortization}
\label{subsec:martingale_amortization}
Our empirical analysis in \cref{sec:obsv} revealed that the belief process typically settles into a stable equilibrium as the sequence length grows ($N\rightarrow \infty$), regardless of the initial drift. This suggests that the model possesses a coherent \emph{true} belief state, but fails to access it immediately at the start of generation. We call this period of initial instability as a \textbf{burn-in phase}, analogous to MCMC. Currently, accessing the model's true belief requires paying a high computational cost at inference time: either by generating long sequences to wait for convergence or by sampling auxiliary seeds for stochastic steering. Furthermore, while steering effectively stabilizes the process, it introduces external tokens that may subtly shift the semantics of the query.

To address these limitations, we propose \emph{Martingale Amortization}. The core idea is to amortize the burn-in phase directly into the model's parameters. By teaching the model to predict its own future stability, we aim to collapse the convergence horizon, enabling the model to output its internal belief distribution zero-shot without requiring long rollouts or external steering context.

We operationalize this by introducing a supervised fine-tuning loss objective in Definition \ref{def:mploss} called \emph{Self-Consistency (SC) loss}.

\begin{definition}[Self-consistency loss]
\label{def:mploss}
    Let $p_k$ be a $k$-step marginal predictive distribution, defined as $p_{\phi, n}^{(k)} := p_\phi(A_{n+k} \mid a_{1:n}, Q)$, and $\hat{p}_{\phi,n}^{(k)}$ be its estimator obtained from predictive resampling. Let
    \begin{equation*}
        \mathcal{L}_{\textrm{SC}}(\phi) = \mathbb{E}_{k_1< k_2} \left[H (\texttt{stopgrad}(\hat{p}_{\phi, n}^{(k_2)}), \hat{p}_{\phi, n}^{(k_1)})\right]
    \end{equation*}
    where $H(p, q)$ is the cross-entropy of $q$ relative to $p$, \texttt{stopgrad}$(\cdot)$ denotes the stop-gradient operator, which treats its argument as a constant during backpropagation, and the expectation is over some prespecified distribution over the pairs of integers $k_1< k_2$. Since the exact future marginal $p_{\phi, n}^{(k)}$ involves an intractable integral over intermediate paths, we approximate it via Monte Carlo integration as follows:
    \begin{equation*}
        \hat{p}_{\phi,n}^{(k)}(\cdot) := \frac{1}{J}\sum_{j=1}^Jp_\phi(\cdot \mid  A_{n+1:n+k-1}^{(j)}, a_{1:n},Q).
    \end{equation*}
\end{definition} 

In practice, as discussed earlier, we approximate the expectation over horizon pairs $(k_1,k_2)$ in a truncated fashion:
we sample $k_1$ uniformly from $\{1,\dots,5\}$ and then sample an offset
$\delta := k_2 - k_1$ uniformly from $\{1,\dots,6-k_1\}$, setting $k_2 = k_1 + \delta$.  By treating the stabilized distributions in the last stages as our pseudo-ground-truth target, we minimize the divergence between the tail distribution and head distribution. This objective forces the early tokens to match the statistics of the late tokens, effectively ``pulling back" the future stability to the present. 


\begin{figure*}
    \centering\includegraphics[width=0.7\textwidth]{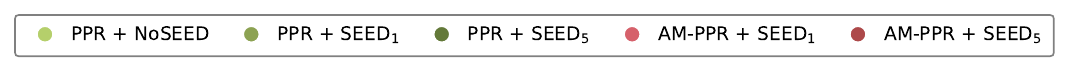}
\end{figure*}

\begin{figure*}[t] 
\centering

    \begin{subfigure}[b]{0.57\textwidth}
    \centering
    \includegraphics[width=1.0\textwidth]{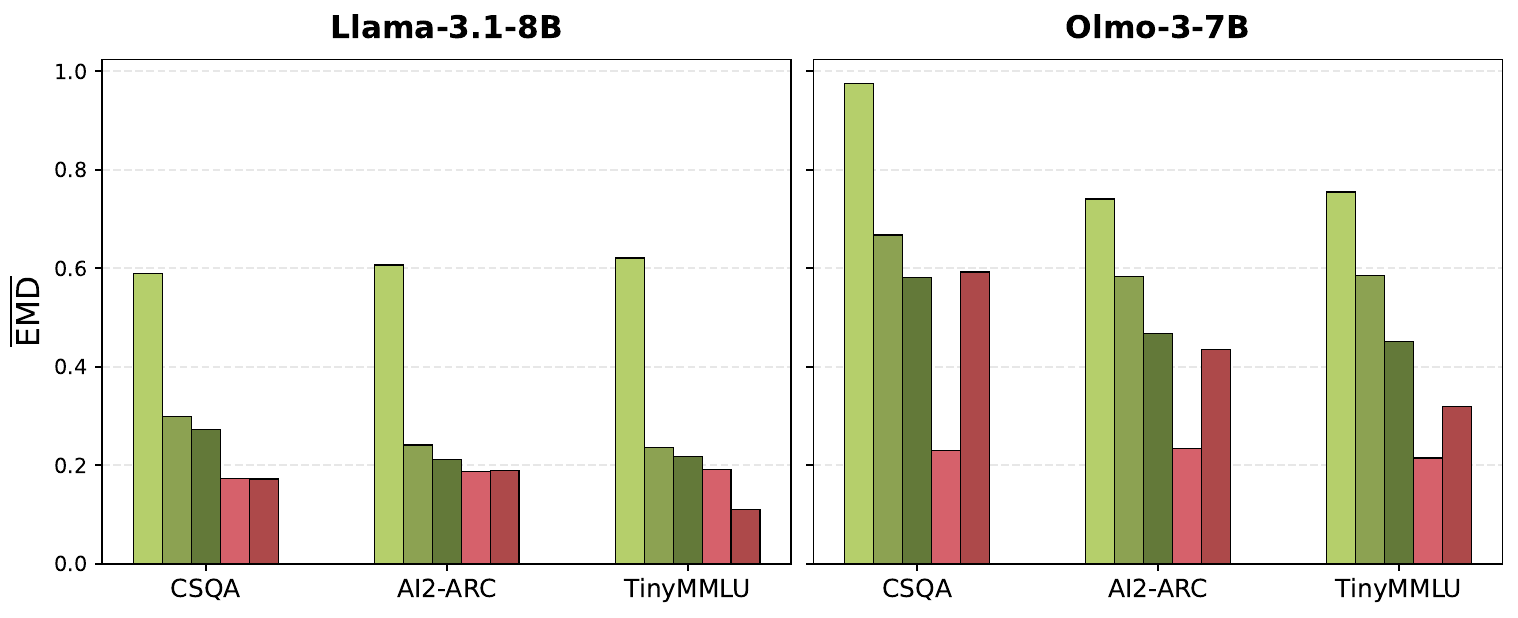}
    \caption{Results of $\overline{\textrm{EMD}}$}
    \label{fig:EMD_a}
    \end{subfigure}
    \hfill
    \begin{subfigure}[b]{0.38\textwidth} 
    \centering
    \includegraphics[width=1.0\textwidth]{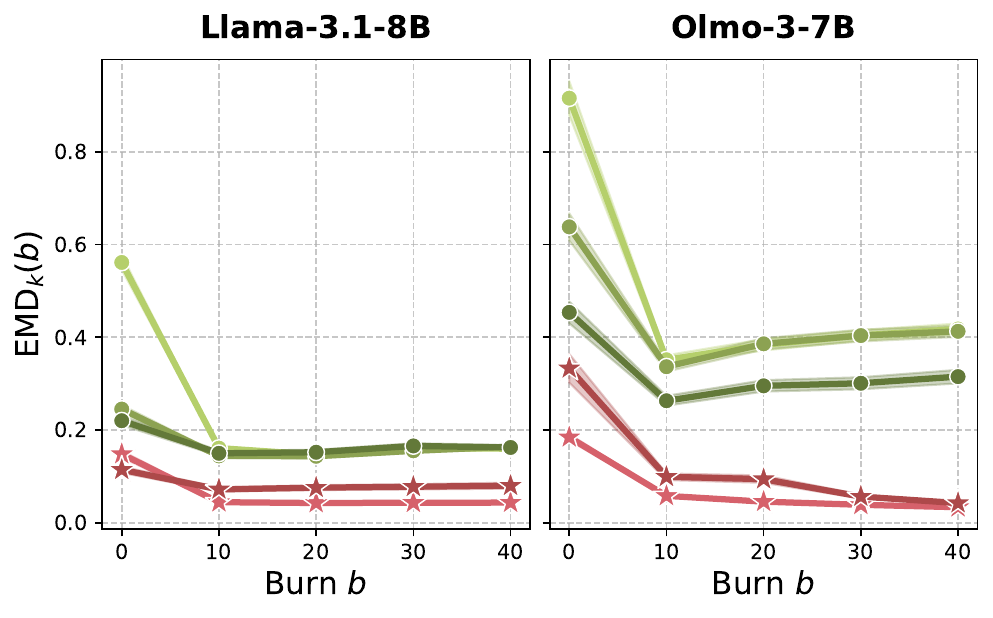}
    \caption{Results of $\textrm{EMD}_k(b)$}
    \label{fig:EMD_b}
    \end{subfigure}
    
    \vspace{0.1cm}
    \caption{\textbf{Analysis of the martingale property violation in PPR.} We evaluated \textsc{Llama-3.1-8B} and \textsc{OLMo-3-7B} across 50 questions sampled from three benchmarks (CSQA, AI2-ARC, TinyMMLU) with L1 distance metric. \textbf{(a) Expected Mean Drift:} We observe a significant reduction in martingale violation when finetuning with the amortization objective ($\mathcal{L}_{\textrm{SC}}$). Additionally, providing stochastic seed answers ($m$) consistently improves stability across all models. \textbf{(b) Convergence Speed ($\textrm{EMD}_k(b)$):} We measured the Expected Drift Metric after burning the initial $n$ answers (with a fixed lookahead of $k=10$) on CSQA benchmark. The shaded region indicates $\pm1$ standard deviation over the questions. The results demonstrate that both answer seeding and $\mathcal{L}_{\textrm{SC}}$ training accelerate the stabilization of marginal distributions, significantly reducing the required burn-in period. }
    \label{fig:EMD}
    \vspace{-5mm}
\end{figure*}
Our theoretical analysis confirms that this SC loss is a good proxy for the c.i.d. condition, and hence the martingale condition. We establish that minimizing $\mathcal{L}_{\mathrm{SC}}$ ensures that the predictive process for a fixed initial prompt $\mathbf{x}_Q = Q$ approaches a martingale. In other words, by training the model to match its own future marginal statistics, we recover the stable belief dynamics of the steered process without requiring auxiliary tokens at inference time. 
Our main theoretical result formalizes this intuition by assuming the following conditions on the training dynamics:
\begin{enumerate}[label=(A\arabic*)]
    \item $\hat p^{(k)}_{\phi,n}$ is uniformly consistent at $p_{\phi,n}^{(k)}$ over $n,k$.
    \item For the trained $\hat\phi$, every pair $(k_1,k_2)$ satisfies
$
\mathrm{KL}\!\left(\hat p^{(k_2)}_{\hat\phi,n}\,\|\,\hat p^{(k_1)}_{\hat\phi,n}\right)\le \varepsilon
$
a.s. for all $n$.
    \item All pairs $(k_1, k_2)$ have positive probability in the SC loss. 
\end{enumerate}

Under these conditions, we derive the following bound on the martingale residual:
\begin{theorem}[Fixed-query martingale induced by SC loss]
\label{thm:mp_fixed_query}
Fix an initial context $\mathbf{x}_Q$ and for bounded $f:\mathcal A\to\mathbb R$, define
$\Delta^f_{n,k}(Q):=\mathbb E_\phi[f(A_{n+k})-f(A_{n+1})\mid \mathcal F_n, Q]$. Then, for all $n,k$, we have
\begin{equation*}
    \sup_{\|h\|_\infty\le 1,\ \|f\|_\infty\le 1}
\Bigl|\mathbb E\bigl[h(A_{1:n})\,\Delta^f_{n,k}(Q)\bigr]\Bigr|
\ \le\ C\sqrt{\varepsilon}, 
\end{equation*}
where $C>0$ is a universal constant.
\end{theorem}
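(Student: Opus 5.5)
The plan is to reduce the martingale residual to a distance between two $\mathcal F_n$-measurable predictive marginals, and then control that distance with Pinsker's inequality. The key identity is
\[
\Delta^f_{n,k}(Q)=\mathbb E_\phi[f(A_{n+k})\mid\mathcal F_n]-\mathbb E_\phi[f(A_{n+1})\mid\mathcal F_n].
\]
The first term equals $\sum_{a}f(a)\,p^{(k)}_{\phi,n}(a)$. The second equals $\sum_a f(a)\,p^{(1)}_{\phi,n}(a)$, and $p^{(1)}_{\phi,n}=\theta_n(\mathbf x_Q)$.

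This holds by the tower property, since $p^{(k)}_{\phi,n}=p_\phi(A_{n+k}=\cdot\mid A_{1:n},Q)$ is exactly the conditional law of $A_{n+k}$ given $\mathcal F_n$. Therefore, for $\|f\|_\infty\le 1$,
\[
|\Delta^f_{n,k}|\le \|p^{(k)}_{\phi,n}-p^{(1)}_{\phi,n}\|_1=2\,\mathrm{TV}\bigl(p^{(k)}_{\phi,n},p^{(1)}_{\phi,n}\bigr).
\]
Because $h(A_{1:n})$ is $\mathcal F_n$-measurable with $\|h\|_\infty\le1$, we get $|\mathbb E[h\,\Delta^f_{n,k}]|\le \mathbb E|\Delta^f_{n,k}|$. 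This bound is uniform in $h$ and $f$, so the supremum causes no difficulty. The test-function formulation is just a weak form of the c.i.d.\ condition, and it is enough to bound the $\mathcal F_n$-measurable residual almost surely.

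Next I pass from the estimators in (A2) to the exact marginals. Take $k_1=1$ and $k_2=k$. The case $k=1$ is trivial since then $\Delta^f_{n,1}=0$. By (A3), the pair $(1,k)$ is among those controlled, so (A2) gives $\mathrm{KL}(\hat p^{(k)}_{\hat\phi,n}\|\hat p^{(1)}_{\hat\phi,n})\le\varepsilon$ almost surely. Pinsker's inequality then yields
\[
\|\hat p^{(k)}_{\hat\phi,n}-\hat p^{(1)}_{\hat\phi,n}\|_1\le\sqrt{2\varepsilon}.
\]
The triangle inequality gives
\[
\|p^{(k)}-p^{(1)}\|_1\le \|p^{(k)}-\hat p^{(k)}\|_1+\sqrt{2\varepsilon}+\|\hat p^{(1)}-p^{(1)}\|_1 .
\]
Note that $\hat p^{(1)}=p^{(1)}$ exactly, because no intermediate path needs to be sampled. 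So only the $k$-step estimation error remains.

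The main obstacle is absorbing this estimation error into $C\sqrt\varepsilon$. Assumption (A1) only gives uniform consistency, meaning $\sup_{n,k}\|\hat p^{(k)}-p^{(k)}\|_1\to0$ as $J\to\infty$ in probability or almost surely. It does not give a rate tied to $\varepsilon$. I would handle this in one of two ways.

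\textbf{First option.} Read (A1) quantitatively: take $J$ large enough that the uniform estimation error is at most $\sqrt{\varepsilon}$ (in expectation, or almost surely). The final expectation then gives $C=\sqrt2+2$, or any comparable universal constant.

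\textbf{Second option.} Let $J\to\infty$ and use the fact that KL is jointly lower semicontinuous. Then $\mathrm{KL}(p^{(k)}\|p^{(1)})\le\liminf_J \mathrm{KL}(\hat p^{(k)}\|\hat p^{(1)})\le\varepsilon$ almost surely. Applying Pinsker directly to the exact marginals gives $C=\sqrt2$ with no residual error at all.

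I would present the second route, since it uses (A1) only qualitatively: (A1) lets the KL bound be transferred from the estimators to the exact marginals, and bounded convergence on the finite simplex handles the expectation.

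Taking expectations, the bound holds for every $n$ and $k$ and uniformly over $h$ and $f$. This gives the stated $C\sqrt\varepsilon$ bound.
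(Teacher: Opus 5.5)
Your proposal is correct and follows the paper's proof essentially step for step. Both arguments take the pair $(1,k)$ from (A2)--(A3), transfer the KL bound from $\hat p$ to $p$ via (A1), apply Pinsker, bound $|\Delta^f_{n,k}|\le 2\,\mathrm{TV}(p^{(k)}_{\hat\phi,n},p^{(1)}_{\hat\phi,n})$, and use $|\mathbb E[h\,\Delta^f_{n,k}]|\le\mathbb E|\Delta^f_{n,k}|$, ending with $C=\sqrt2$. Your use of lower semicontinuity of KL, together with the observation that $\hat p^{(1)}_{\phi,n}=p^{(1)}_{\phi,n}$ exactly, usefully justifies the transfer step, which the paper only asserts follows from uniform consistency.
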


In particular, one can interpret the above result as $\{\theta_n(\mathbf{x}_Q)\}_{n\ge 0}$ being approximately a martingale, which is connected to the approach of \citet{battiston2025bayesian}.

The three assumptions clarify the interplay between model capacity and the optimization objective. (A1) requires enough rollouts to guarantee the empirical stability of the predictors. (A2) serves as a post-training condition that reflects the representational capacity of the model; it assumes that the target LLM family is flexible enough to capture approximately martingale dynamics. Under this expressivity condition, the SC objective serves as a proper guide to find such a solution, bounding the horizon-wise discrepancy by $\varepsilon$. (A3) then ensures that this alignment propagates across the relevant horizon range. 

Consequently, the theorem tones down the claim to a conditional guarantee: if the model space is flexible enough, the SC tuning objective can successfully find it, thereby bounding the martingale residual. In practice, these conditions are only required to hold over the finite horizon set, making them empirically justifiable.

The detailed proof of \cref{thm:mp_fixed_query} is provided in \cref{appendix:b} in the appendix.
Crucially, while the theorem is stated for a fixed query to establish the pointwise guarantee, our training objective $\mathcal{L}_{\textrm{SC}}$ minimizes the expected martingale residual over the data distribution $\mathcal{D}$. Consequently, the result extends to unseen queries: by optimizing the expectation, the model learns a generalized mechanism for self-consistency rather than memorizing specific answer trajectories. Thus, for any new query $Q_{\text{test}} \sim \mathcal{D}$, the model is expected to satisfy the $\varepsilon$-approximate martingale property, producing stable zero-shot predictions without the need for inference-time interventions. We refer the reader to \cref{thm:mp-generalization} for the detailed statement and proof of the generalization of \cref{thm:mp_fixed_query} to unseen queries.

Finally, while our theory is stated for the SC objective alone, in practice, we occasionally augment optimization with a small score-function (policy-gradient style) correction term to mitigate finite-sample bias in the Monte Carlo estimators $\hat{p}_{\phi, n}^{(k)}$. This correction is used purely as an optimization heuristic and does not alter the theoretical implication. Additional proofs and details can be found in Appendix \ref{appendix:b}.

\begin{rem}
In practice, we optimize the SC loss using a finite set of horizon pairs, so the implementation does not cover the full $(k_1,k_2)$ range assumed by the theorem.
Despite this mismatch, Section~\ref{sec:exp} shows that the learned model attains approximately martingale behavior in practice since the region of instability is only at the beginning.
\end{rem}

 \begin{table*}[t]
\caption{Accuracy and AUC score on TinyMMLU, AI2-ARC, CSQA inferred by \textsc{Llama-3.1-8B}}
\label{tab:llama-accauc}
\centering
\small
\setlength{\tabcolsep}{9pt}
\renewcommand{\arraystretch}{1.15}

\begin{tabular}{l c c c c c c}
\toprule
\multicolumn{1}{c}{\multirow{2}{*}{\textbf{Method}}} &
\multicolumn{2}{c}{\textbf{TinyMMLU}} &
\multicolumn{2}{c}{\textbf{AI2-ARC}} &
\multicolumn{2}{c}{\textbf{CSQA}} \\
\cmidrule(lr){2-3}\cmidrule(lr){4-5}\cmidrule(lr){6-7}
 & \textbf{ACC} & \textbf{AUC}
 & \textbf{ACC} & \textbf{AUC}
 & \textbf{ACC} & \textbf{AUC} \\

\midrule
PPR + \texttt{NoSEED}    &   46.0  &   78.0   & 62.6    &85.6    & 50.0        & 85.6      \\
PPR + $\texttt{SEED}_1$  & 59.2   & 80.0   & 77.8     &  91.3 & 74.0      & 92.7  \\
PPR + $\texttt{SEED}_5$  & 61.2   & \textbf{83.1}   & \underline{80.7} & 93.0  & \textbf{\underline{78.0}}    &  92.7 \\
\midrule
AM-PPR + $\texttt{SEED}_1$  & \textbf{\underline{62.3}} & \textbf{83.1} & \textbf{\underline{81.5}} & 92.9 & \textbf{\underline{78.0}} & \textbf{\underline{93.1}} \\
AM-PPR + $\texttt{SEED}_5$   &\textbf{62.2}  & \textbf{\underline{85.1}} &80.5 & \textbf{\underline{94.2}} & \textbf{77.0} & \textbf{94.2} \\
\midrule
\texttt{Direct-Query}  & 61.2   & 81.6   & 80.5   & \textbf{93.1}  &  \textbf{\underline{78.0}} & 91.9  \\
                
\bottomrule
\end{tabular}

\end{table*}

\begin{table*}[t]
\caption{ECE, NLL and Brier Score on TinyMMLU, AI2-ARC, CSQA inferred by \textsc{Llama-3.1-8B}}
\label{tab:llama-ecenllbs}
\centering
\small
\setlength{\tabcolsep}{9pt}
\renewcommand{\arraystretch}{1.15}

\begin{tabular}{l c c c c c c c c c}
\toprule
\multicolumn{1}{c}{\multirow{2}{*}{\textbf{Method}}} &
\multicolumn{3}{c}{\textbf{TinyMMLU}} &
\multicolumn{3}{c}{\textbf{AI2-ARC}} &
\multicolumn{3}{c}{\textbf{CSQA}} \\
\cmidrule(lr){2-4}\cmidrule(lr){5-7}\cmidrule(lr){8-10}
 & \textbf{ECE} & \textbf{NLL} & \textbf{BS} 
& \textbf{ECE} & \textbf{NLL} & \textbf{BS}
& \textbf{ECE} & \textbf{NLL} & \textbf{BS} \\

\midrule
\textsc{PPR} + \texttt{NoSEED}       &  0.1763 & 1.2480  & 0.6247   &0.1623   & 1.1199   & 0.5222 &  0.1273 & 1.5178 &  0.6430  \\
\textsc{PPR} + $\texttt{SEED}_1$     & 0.1537 &  \textbf{\underline{1.0526}}  &  0.5346 & 0.1695 & \textbf{0.8936} & 0.3968 & 0.1522  &0.9109  & 0.3842  \\
\textsc{PPR} + $\texttt{SEED}_5$     & \textbf{0.1278}   & \textbf{1.0721}   & \textbf{0.5088}  & 0.1259 & \textbf{\underline{0.7585}} & 0.3281 & 0.1967 & \textbf{0.8832}    & 0.4324 \\
\midrule
\textsc{AM-PPR} + $\texttt{SEED}_1$
                & \textbf{\underline{0.1229}} & 2.2144 &\textbf{\underline{0.5018}} & 0.1943 & 0.9240 & 0.3567 & 0.1139 & 0.9722 & 0.3537 \\
\textsc{AM-PPR} + $\texttt{SEED}_5$
                &0.1568  & 1.2690 &0.5123 & \textbf{\underline{0.0845}} & 1.2752 & \textbf{\underline{0.2872}} & \textbf{\underline{0.0674}} & \textbf{\underline{0.7886}} & \textbf{\underline{0.3299}} \\
\midrule
\texttt{Direct-Query}
                & 0.2457  & 5.4862   & 0.5631   & \textbf{0.1251}  &  2.4481 & \textbf{0.2978} &\textbf{0.1072} & 3.0363 & \textbf{0.3453} \\
                
\bottomrule
\end{tabular}
\vspace{-5mm}
\end{table*}

\section{Experiment}
\label{sec:exp}
We evaluate the ability of Large Language Models to generate coherent martingale sequences within the context of Multiple Choice Question Answering (MCQA). The primary objective of our experiments is to empirically validate that our proposed Stochastic Steering and Amortization successfully mitigate this violation. Furthermore, we assess whether enforcing martingale stability comes at a cost to predictive accuracy. We report performance metrics confirming that our method preserves and often improves the base model's calibration and reliability.

We conducted experiment across two open-source instruct-tuned language models: \textsc{Llama-3.1-8B}~\citep{grattafiori2024llama}, and \textsc{Olmo-3-7B}~\citep{olmo2025olmo}. For MCQA benchmark datasets, we used queries from CSQA~\citep{talmor2019commonsenseqa}, AI2-ARC~\citep{clark2018think}, and TinyMMLU~\citep{polo2024tinybenchmarks}. Training datasets for the amortization over each language model were constructed from sample queries from the CSQA training split. Further details, including hyperparameters, can be found in Appendix \ref{appendix:c}.


\subsection{Martingale Property Violation.}

\paragraph{Measuring martingale violations.}
Unlike \citet{falck2024}, who quantify violations via an MLE proxy $\bar{\theta}_N$, we measure martingale violations directly in terms of the model distribution $p_\phi$.
This is feasible in our MCQA setting, where the full predictive distribution over the discrete answer space is directly accessible, enabling a more faithful assessment of distributional drift.

To this end, we introduce two complementary diagnostics based on \emph{Expected Martingale Drift} (EMD):
(i) $\overline{\text{EMD}}$, which summarizes the average martingale violation along the entire answer path, and
(ii) $\text{EMD}_k(b)$, which isolates the \emph{early} drift pattern observed in \cref{sec:obsv}.
Concretely, $\overline{\text{EMD}}$ is the expected distance between the immediate ($1$-step) prediction $p_{\phi,n}^{(1)}$ and the $k$-step-ahead prediction $p_{\phi,n}^{(k)}$, both conditioned on the current history, averaged over $k$:
\begin{equation*}
    \overline{\text{EMD}}
    :=\frac{1}{K}\sum_{k}
    \mathbb{E}_{Q \sim \mathcal{D}}
    \big[\mathbb{E}_{A_{1:n}} \big[d\!\big(p_{\phi, n}^{(k)}, p_{\phi, n}^{(1)}\big)\big]\big].
\end{equation*}
We estimate the $k$-step predictions using the same rollout procedure as in the SC loss objective (Definition~\ref{def:mploss}), ensuring that evaluation is consistent with the training signal.
In practice, instead of averaging over all prefixes $A_{1:n}$, we approximate the expectation by sampling one prefix from the top-$K$ most likely prefixes for each $n \in \mathcal{N}$ (we use $K=5$ and $\mathcal{N}=\{0,2,4,6,8\}$).
Finally, to focus on prefix-dependent \emph{initial} drift, we define
\begin{equation*}
    \text{EMD}_k(b)
    := \mathbb{E}_{Q \sim \mathcal{D}}
    \big[\mathbb{E}_{A_{1:n}} \big[d\!\big(p_{\phi, n+b}^{(k)}, p_{\phi, n+b}^{(1)}\big)\big]\big],
\end{equation*}
which measures the $k$-step drift at an offset $b$ along the trajectory.


\paragraph{Results}

\cref{fig:EMD} shows that martingale violations are dramatically reduced after tuning with the SC loss.
In particular, \cref{fig:EMD_a} indicates that \textsc{AM-PPR} consistently achieves substantially lower $\overline{\text{EMD}}$ value than \textsc{PPR} across all datasets and base models, confirming that our objective effectively suppresses distributional drift during learning.
A similar trend appears in \cref{fig:EMD_b}: compared to \textsc{PPR}, \textsc{AM-PPR} significantly mitigates the early drift pattern observed in \cref{sec:obsv}, providing direct evidence that our method targets the transient martingale violation in the initial regime.
Finally, consistent with \cref{sec:obsv}, answer seeding reduces martingale violations for \textsc{PPR}; nevertheless, \textsc{AM-PPR} remains strictly better even \emph{after} applying the seeding, showing that SC tuning and answer seeding are complementary and that the best performance is achieved when combining both.

\subsection{Predictive performance and calibration}

Having validated the stability of the belief process, we now evaluate the quality of the resulting martingale posteriors. Since the martingale property guarantees the existence and convergence of a stable limit belief (as discussed in \cref{sec:preliminary}), we can treat the estimated martingale posterior mean $q^*$ as the model's "true" answer distribution and evaluate it against standard metrics: Accuracy (Acc), AUC, Expected Calibration Error (ECE), Negative Log-Likelihood (NLL), and Brier Score (BS). Refer to \cref{appendix:eval_metrics} to see the detailed definitions of the eval metrics.

\paragraph{Efficient Estimation via Amortization.}A critical advantage of our method lies in the efficiency of estimation. Because \textsc{AM-PPR} achieves sufficiently low \textsc{EMD} (negligible drift), the initial predictive distribution effectively approximates the infinite-horizon limit. Consequently, we estimate the martingale posterior $q^*$ differently for each method:

\begin{itemize}\item \textbf{\textsc{PPR} (Base):} We estimate $q^* \approx \hat{\theta}_N$, requiring a long rollout ($N \gg 1$) to allow the process to stabilize.\item \textbf{\textsc{AM-PPR} (Ours):} We estimate $q^* \approx \hat{\theta}_1$, utilizing the immediate next-token distribution.\end{itemize}

This highlights the practical utility of our approach: \textsc{AM-PPR} allows us to access the stable belief state zero-shot, eliminating the computational overhead of generating long sequences.

\paragraph{Results.} We observe that answer seeding is essential for maintaining predictive accuracy comparable to PPR in \cref{tab:llama-accauc}. Furthermore, in \cref{tab:llama-ecenllbs}, \textsc{AM-PPR} demonstrates improvements in calibration metrics (ECE and NLL) across benchmarks. While standard instruction-tuned models are often overconfident, the martingale-based estimation—particularly when regularized via our amortization objective—yields uncertainty estimates that are better aligned with the empirical correctness of the answers. However, the results of the \textsc{PPR} variants reveal a nuanced relationship: enforcing the martingale property does not monotonically improve calibration. This indicates that the martingale property is merely \emph{necessary} condition for valid Bayesian inference; consequently, minimizing martingale violations guarantees internal consistency, but does not inherently ensure that the model is well-calibrated. Additional experiment results can be found in Appendix \ref{app:exp}.



\section{Conclusion}
We revisited the question of whether LLM predictive behavior can be meaningfully interpreted through the martingale/Bayesian lens, extending the martingale perspective of \citet{falck2024} to the most common deployment regime: repeatedly answering a fixed query. By introducing \emph{prompted predictive resampling} and an SC loss that \emph{amortizes} stabilization, we showed that practical LLMs exhibit a transient belief drift that can be substantially reduced. Across standard QA benchmarks, the resulting stabilized predictive distributions improve uncertainty calibration while maintaining accuracy, supporting the view that martingale-style coherence can be operationalized as a useful alignment target for predictive uncertainty.

Our findings also suggest clear limitations and directions for future work. First, reducing martingale violation is not by itself sufficient to guarantee well-calibrated uncertainty: even if the predictive sequence is approximately martingale, this remains only a \emph{necessary} condition for a fully Bayesian interpretation, and additional structure is needed to ensure that calibrated probabilities align with task-ground truth. Second, our method requires prompt engineering to elicit clean sequential resampling behavior, indicating that practical deployment may require careful prompt design. Developing training and inference procedures that produce truly Bayesian-calibrated predictors while minimizing prompt dependence remains an important open problem.

Finally, while our primary evaluation is focused on the MCQA setting to exactly measure drift over a discrete answer simplex, our framework is inherently extensible to richer reasoning trajectories. As a proof of concept, our preliminary experiments demonstrate that martingale consistency successfully emerges when evaluated over the embedding spaces of generated solution-answer pairs. The results can be found in Appendix \ref{app:future}. Ultimately, scaling this framework to analyze and regularize the \emph{stochastic thinking paths} of language models, thereby ensuring representational belief stability across complex multi-step reasoning trajectories, opens up a highly promising avenue for future research.

\section*{Acknowledgements}
This work was partly supported by Institute of Information \& communications Technology Planning \& Evaluation(IITP) grant funded by the Korea government(MSIT) (No.RS-2019-II190075, Artificial Intelligence Graduate School Program(KAIST); No.RS-2022-II220184, Development and Study of AI Technologies to Inexpensively Conform to Evolving Policy on Ethics), and the National Research Foundation of Korea(NRF) grant funded by the Korea government(MSIT) (RS-2022-NR070855). HG was supported by the Institute of Information \& Communications Technology Planning \& Evaluation(IITP) grant funded by the Korea government(MSIT) (No.RS-2025-02219317, AI Star Fellowship(Kookmin University)) and the National Research Foundation of Korea(NRF) grant funded by the Korea government(MSIT) (RS-2026-25480825).

\section*{Impact Statement}
This work significantly advances the reliability of Generative AI by addressing belief drift, a fundamental phenomenon where Large Language Models violate the martingale property and exhibit transient inconsistencies during autoregressive generation. Societally, by enforcing predictive coherence and improving uncertainty calibration without sacrificing accuracy, our approach enhances the trustworthiness of AI systems, a critical prerequisite for their responsible deployment in high-stakes domains such as medical diagnosis, legal analysis, and scientific reasoning.

Environmentally, our proposed \emph{Martingale Amortization} substantially improves inference efficiency; by distilling long-horizon stability into early generation steps via the SC loss, we eliminate the need for computationally expensive repeated sampling or long rollouts to access the model's true belief state. However, it is crucial to note that this method stabilizes the model's internal beliefs regardless of their factual correctness. Consequently, while it mitigates transient inconsistency, it does not inherently correct deep-seated biases or hallucinations, necessitating complementary alignment measures to ensure that the stabilized beliefs remain grounded in truth and safety.

%


\bibliography{ref}
\bibliographystyle{icml2026}

\newpage
\appendix
\onecolumn
\crefname{section}{Appendix}{Appendices}
\section{Related Work}
\label{appendix:related_works}

A growing body of work explains ICL through the lens of Bayesian inference, arguing that a pretrained transformer can \emph{implicitly} infer a latent concept or knowledge variable that summarizes the demonstrations, and then predict new outputs by conditioning on this latent state~\citep{xie2022,jiang2023,wang2024,ye2024}.
Related analyses connect ICL to Bayesian model averaging when demonstrations are treated as exchangeable~\citep{zhang2023and}, and to Bayes-optimal prediction in stylized regimes where high-capacity transformers trained with square loss recover Bayes predictors~\citep{akyurek2022learning,panwar2023context}.
In a particularly clean setting, \citet{xie2022} proves implicit Bayesian inference for transformers
trained on data generated from a Hidden Markov Model; while this likelihood is not i.i.d. exchangeable in general, the analysis suggests that sample-efficient ICL on exchangeable data may require the model to recognize (approximately) exchangeable structure.

Our work is directly motivated by a complementary statistical line: the \emph{predictive modelling paradigm} in Bayesian statistics, which studies whether one-step predictive sequences (without an explicit latent variable) can be represented \emph{as if} they arose from a latent $\theta$ with a prior--likelihood pair~\citep{fong2023,fortini2025}.
In this framework, exchangeability guarantees the existence of an underlying latent variable, and the stronger condition of c.i.d., equivalently, a martingale property of predictive beliefs, yields a \emph{martingale posterior} that generalizes classical posteriors~\citep{berti2004,fong2023}.
Leveraging this connection, \citet{falck2024} derives diagnostics and presents evidence that current LLMs can violate the martingale property on synthetic/toy ICL tasks, questioning the hypothesis that ICL is
Bayesian in general.
In contrast, we extend the martingale lens beyond synthetic ICL settings to the common task with regime: repeatedly answering a fixed query $Q$.
We show that belief drift can be transient yet practically significant, and propose mechanisms to
\emph{steer} the induced predictive sequence toward martingale-like behavior: an inference-time answer seeding strategy and a training-time objective (SC loss) that directly targets predictive coherence.

\section{Proofs of Theoretical Statements}
\label{appendix:a} 
This appendix collects proofs for the theoretical claims used throughout the paper. We first show that the martingale property of the predictive belief process is equivalent to the conditional identical distribution (c.i.d.) property of the generated answer sequence, formalizing the intuition that a coherent belief should not depend on how far into the future we look. We then prove a consistency result for our MP estimator: under the martingale property, the empirical MLE fitted to a long resampled trajectory converges almost surely to the limiting belief $\theta_\infty(\mathbf{x}_Q)$.

The first proposition is a structural equivalence. It connects the martingale condition on belief states (Definition \ref{defmp}) to a predictive invariance principle: if $\{\theta_n\}$ is a martingale, then conditioning on the current history $A_{1:n}$, the distribution of any future answer $A_{n+k}$ is the same as that of the next answer $A_{n+1}$. In other words, the model’s future marginals are time-homogeneous given the past, which is precisely the c.i.d. property for the generated sequence.

\begin{proposition}
\label{prop:a_1}
Let the belief state be $\theta_n(\cdot) := p_\phi(A_{n+1} = \cdot \mid A_{1:n}, \mathbf{x})$ where $A_{1:n} \sim p_\phi(\cdot \mid \mathbf{x})$ for a fixed context $\mathbf{x}$. The following arguments are equivalent. 
\begin{enumerate}[label={(\roman*)}]
    \item $\mathbb{E}[\theta_{n+k} \mid A_{1:n}] = \theta_n$,
    \item $A_{1:\infty} \sim p_\phi(\cdot \mid \mathbf{x})$ is \textit{conditionally identically distributed}, 
    \item $p_\phi(A_{n+k} = a \mid A_{1:n}, \mathbf{x}) = p_\phi(A_{n+1} = a \mid A_{1:n}, \mathbf{x}) \quad \forall n, k \in \mathbb{N} \quad \forall a \in \mathcal{A}$. 
\end{enumerate}
\end{proposition}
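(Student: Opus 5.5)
The plan is to route everything through (iii), proving (i)$\Leftrightarrow$(iii) and (ii)$\Leftrightarrow$(iii). The central identity is the tower property: for every $n,k\in\mathbb{N}$ and $a\in\mathcal{A}$,
\begin{equation*}
\mathbb{E}[\theta_{n+k}(a)\mid A_{1:n}] = \mathbb{E}\bigl[p_\phi(A_{n+k+1}=a\mid A_{1:n+k},\mathbf{x})\,\big|\,A_{1:n}\bigr] = p_\phi(A_{n+k+1}=a\mid A_{1:n},\mathbf{x}).
\end{equation*}
This holds because $\sigma(A_{1:n})\subseteq\sigma(A_{1:n+k})$ and the answers are generated autoregressively from $p_\phi(\cdot\mid\mathbf{x})$. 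Since $\mathcal{A}$ is finite, every conditional expectation is a finite sum over intermediate paths $A_{n+1:n+k}$. So I will verify this identity directly by marginalising the chain rule, and I will not appeal to abstract regular conditional distributions.

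For (i)$\Leftrightarrow$(iii), the identity turns (i) into the statement $p_\phi(A_{n+k+1}=\cdot\mid A_{1:n},\mathbf{x}) = p_\phi(A_{n+1}=\cdot\mid A_{1:n},\mathbf{x})$ for all $n$ and all $k\ge 0$. Reindexing $k+1\mapsto k$ gives exactly (iii), where the case $k=1$ is trivial. Both directions are immediate because the identity is an equality, not an inequality. One detail needs checking: (i) quantifies over all $k\in\mathbb{N}$, while (iii) also needs $k=1$. That case is tautological, so the indexing matches up.

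For (ii)$\Leftrightarrow$(iii), I will use the definition of c.i.d.\ from \citet{berti2004}. A sequence is c.i.d.\ with respect to its natural filtration if $A_1\sim A_2$ and, for every $n$, $\mathbb{P}(A_{n+k}\in\cdot\mid A_{1:n})=\mathbb{P}(A_{n+1}\in\cdot\mid A_{1:n})$ a.s.\ for all $k\ge 1$. Conditioning throughout on the fixed context $\mathbf{x}$, this is (iii) for $n\ge 1$. The $n=0$ clause, that $A_1$ and $A_k$ have the same law, comes from (iii) with $n=0$ (empty history). I will either read $\mathbb{N}$ as including $0$, or derive that clause from the $n=1$ statement by taking expectations: $\mathbb{P}(A_{k})=\mathbb{E}[\mathbb{P}(A_{k}\mid A_1)]=\mathbb{E}[\mathbb{P}(A_2\mid A_1)]=\mathbb{P}(A_2)$. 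The remaining equality $\mathbb{P}(A_2)=\mathbb{P}(A_1)$ then has to come from the $n=0$ case.

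I expect the only real obstacle to be this bookkeeping. It means pinning down which version of the c.i.d.\ definition is used, and making sure the index ranges in (i) and (iii) match, including $n=0$. A secondary point is ``a.s.''\ versus pointwise equality. On a finite answer space the two agree on histories of positive probability, and LLM softmax outputs give every history positive probability, so the equalities hold for every history. With that settled, the argument is just the tower identity above.
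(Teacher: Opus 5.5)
Your argument is correct and is exactly the standard tower-property proof that the paper simply outsources to \citet{berti2004}. The identity $\mathbb{E}[\theta_{n+k}(a)\mid A_{1:n}]=p_\phi(A_{n+k+1}=a\mid A_{1:n},\mathbf{x})$ turns (i) into (iii), and (iii) with $n\ge 0$ is the natural-filtration definition of c.i.d. Your concern about $n=0$ is necessary, not just bookkeeping. If $n\ge 1$ only, take $A_1$ uniform on $\mathcal{A}$ and $A_2,A_3,\dots$ i.i.d.\ from a non-uniform $q$, independent of $A_1$; this satisfies (i) and (iii) but is not c.i.d. So fix $n\ge 0$ with $\theta_0=p_\phi(A_1=\cdot\mid\mathbf{x})$, as the paper implicitly does, rather than relying on the $n=1$ fallback.
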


\begin{proof}
The above results are standard and can be found in \citet{berti2004}.


\end{proof}

The second proposition justifies the estimation step used in Section \ref{sec:methods}. Under the categorical likelihood, the MP estimator reduces to the empirical frequency vector along a predictive-resampling trajectory. When the predictive belief process is a martingale, it admits an almost-sure limit $\theta_\infty(\mathbf{x}_Q)$; the result shows that, as the horizon $N$ grows, the empirical frequency estimator converges almost surely to this same limit. This provides the theoretical basis for using long predictive-resampling rollouts (and Monte Carlo averaging across trajectories) to approximate the martingale posterior distribution over $\theta_\infty(\mathbf{x}_Q)$.

\begin{proposition}[Consistency of the MP estimator]
\label{prop:mp-consistency}
Fix $\mathbf{x}_Q$ and let $A_{1:\infty} \sim p_\phi(\cdot \mid \mathbf{x}_Q)$ with
$\mathcal F_n := \sigma(A_{1:n})$.
Assume the predictive belief process
\begin{equation*}
    \theta_n(\mathbf{x}_Q) := p_\phi(A_{n+1}\in\cdot \mid \mathcal F_n,\mathbf{x}_Q)
\end{equation*}
is a martingale.
Let $\bar{\theta}^{MP}_N:= \arg\max_{\vartheta \in \mathcal{P}(\mathcal{A})}\log p(A_{1:N} \mid \vartheta, \mathbf{x}_Q),$ where $p(A_{1:N}\mid \vartheta,\mathbf{x}_Q)=\prod_{n=1}^N \vartheta(A_n)$.
Then $\bar{\theta}^{MP}_N \to \theta_\infty(\mathbf{x}_Q)$ a.s.
\end{proposition}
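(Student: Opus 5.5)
The MLE for the categorical likelihood is the empirical frequency vector. Maximizing $\sum_{n=1}^N \log\vartheta(A_n)$ over the simplex (by Lagrange multipliers, or Gibbs' inequality) gives
\[
\bar\theta^{MP}_N(a)=\frac1N\sum_{n=1}^N \mathbbm{1}\{A_n=a\},\qquad a\in\calA .
\]
Since $\calA$ is finite, it suffices to prove coordinatewise convergence $\bar\theta^{MP}_N(a)\to\theta_\infty(a)$ a.s. for each fixed $a$.

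Fix $a$ and decompose
\[
\frac1N\sum_{n=1}^N \mathbbm{1}\{A_n=a\}=\frac1N\sum_{n=1}^N \theta_{n-1}(a)+\frac1N\sum_{n=1}^N D_n,
\]
where $D_n:=\mathbbm{1}\{A_n=a\}-\theta_{n-1}(a)$ and $\theta_0=p_\phi(A_1\in\cdot\mid\mathbf{x}_Q)$. Handle the two terms separately.

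\textbf{First term.} By the martingale assumption and Doob's theorem (as recalled in \cref{sec:preliminary}), $\theta_n(a)\to\theta_\infty(a)$ almost surely, using boundedness in $[0,1]$. Ces\`aro averages of a convergent sequence converge to the same limit, so $\frac1N\sum_{n=1}^N\theta_{n-1}(a)\to\theta_\infty(a)$ a.s.

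\textbf{Second term.} By definition of $\theta_{n-1}$, $\mathbb{E}[D_n\mid\mathcal F_{n-1}]=0$, so $(D_n)$ is a martingale difference sequence bounded by $1$. Let $M_N:=\sum_{n\le N} D_n/n$. It is an $L^2$-bounded martingale, since $\sum_n \mathbb{E}[D_n^2]/n^2\le\sum_n 1/n^2<\infty$, and hence converges a.s. Kronecker's lemma then gives $\frac1N\sum_{n\le N}D_n\to0$ a.s. This is the martingale strong law of large numbers, and it does not use the martingale property of $\theta_n$ at all, only the definition of $\theta_{n-1}$ as a one-step predictive.

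Combining the two terms yields $\bar\theta^{MP}_N(a)\to\theta_\infty(a)$ a.s. for every $a$, and a finite union of null sets gives convergence of the vector.

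\textbf{Main obstacle.} The step needing the most care is the martingale strong law for the residual $D_n$: one must verify that conditioning on $\mathcal F_{n-1}$ together with the fixed $\mathbf{x}_Q$ gives exactly $\theta_{n-1}$ as the conditional law of $A_n$, and check the $L^2$ summability. The remaining steps are routine.
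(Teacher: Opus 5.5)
Your proof is correct and matches the paper's argument: the same closed-form empirical-frequency MLE, the same decomposition $\bar\theta^{MP}_N(a)=\frac1N\sum_{n=1}^N\theta_{n-1}(a)+\frac1N\sum_{n=1}^N D_n$, and the same Doob-plus-Ces\`aro argument for the first term. The only difference is how you show the martingale-difference average vanishes: the paper uses Azuma--Hoeffding plus Borel--Cantelli, which also gives an exponential tail bound. Your $L^2$-martingale-plus-Kronecker argument needs only $\sum_n \mathbb{E}[D_n^2]/n^2<\infty$, and both routes work here since $|D_n|\le 1$.
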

\begin{proof}
Let $\mathcal F_n := \sigma(A_{1:n})$ and recall that $\mathcal A$ is finite.

\paragraph{Step 1: Closed-form of the MP estimator under the categorical likelihood.}
Under the categorical likelihood,
\[
p(A_{1:N}\mid \vartheta,\mathbf{x}_Q)=\prod_{n=1}^N \vartheta(A_n),
\qquad \vartheta \in \mathcal P(\mathcal A),
\]
the log-likelihood is
\[
\log p(A_{1:N}\mid \vartheta,\mathbf{x}_Q)
= \sum_{n=1}^N \log \vartheta(A_n)
= \sum_{a\in\mathcal A} N_a \log \vartheta(a),
\]
where $N_a := \sum_{n=1}^N \mathbf 1\{A_n=a\}$.
Maximizing $\sum_{a} N_a\log \vartheta(a)$ subject to $\sum_a \vartheta(a)=1$ yields
the standard multinomial MLE,
\[
\bar{\theta}^{MP}_N(a)=\frac{N_a}{N}
=\frac{1}{N}\sum_{n=1}^N \mathbf 1\{A_n=a\},
\qquad \forall a\in\mathcal A.
\]

\paragraph{Step 2: Almost-sure convergence to $\theta_\infty$.}
Fix any $a\in\mathcal A$ and define $X_n := \mathbf 1\{A_n=a\}$.
By the definition of the predictive belief,
\[
\mathbb E[X_{n+1}\mid \mathcal F_n,\mathbf{x}_Q]
= \mathbb P_\phi(A_{n+1}=a\mid \mathcal F_n,\mathbf{x}_Q)
= \theta_n(\mathbf{x}_Q)(a).
\]
Let $m_n := \theta_n(\mathbf{x}_Q)(a)$ and define the martingale differences
\[
D_{n+1} := X_{n+1} - m_n.
\]
Then $\mathbb E[D_{n+1}\mid \mathcal F_n,\mathbf{x}_Q]=0$ and $|D_{n+1}|\le 1$.
Let $S_N := \sum_{n=0}^{N-1} D_{n+1}$.
Then $(S_N)$ is a martingale, and its increments satisfy $|S_{N+1}-S_N| = |D_{N+1}| \le 1 \quad \text{a.s.}$.

By Azuma--Hoeffding~\citep{azuma1967weighted}, for any $\varepsilon>0$,
\[
\mathbb P\!\left(\left|\frac{S_N}{N}\right|\ge \varepsilon\right)
\le 2\exp\!\left(-\frac{N\varepsilon^2}{2}\right).
\]
Since $\sum_{N=1}^\infty 2\exp(-\frac{N\varepsilon^2}{2})<\infty$, the Borel--Cantelli lemma~\citep{williams1991probability} implies
$\frac{S_N}{N}\to 0$ almost surely.

Now decompose the empirical mean:
\begin{equation*}
    \bar{\theta}^{MP}_N(a)
=\frac{1}{N}\sum_{n=1}^N X_n
=\frac{1}{N}\sum_{n=1}^N m_{n-1}
+\frac{1}{N}\sum_{n=1}^N (X_n-m_{n-1})
=\frac{1}{N}\sum_{n=0}^{N-1} m_n + \frac{S_N}{N}.
\end{equation*}
By assumption and Doob's convergence~\citep{doob1949application}, $(\theta_n(\mathbf{x}_Q))_{n\ge 0}$ is a martingale and
$\theta_n(\mathbf{x}_Q)\to\theta_\infty(\mathbf{x}_Q)$ almost surely; hence
$m_n \to \theta_\infty(\mathbf{x}_Q)(a)$ almost surely.
By Ces\`aro convergence~\citep{williams1991probability},
\begin{equation*}
    \frac{1}{N}\sum_{n=0}^{N-1} m_n \to \theta_\infty(\mathbf{x}_Q)(a) \quad \text{a.s.}
\end{equation*}
Combining with $\frac{S_N}{N}\to 0$ a.s., we obtain
\begin{equation*}
    \bar{\theta}^{MP}_N(a)\to \theta_\infty(\mathbf{x}_Q)(a) \quad \text{a.s.}
\end{equation*}

Since $\mathcal A$ is finite, the above convergence holds simultaneously for all $a\in\mathcal A$
(on the intersection of finitely many probability-one events), which implies
$\bar{\theta}^{MP}_N \to \theta_\infty(\mathbf{x}_Q)$ a.s. as vectors in $\mathcal P(\mathcal A)$.
\end{proof}


\newpage

\section{Theoretical Justification of Self-Consistency (SC) Loss}
\label{appendix:b}

Here, we provide a theoretical justification of the proposed SC loss by establishing its connection to martingale belief dynamics. We first analyze the fixed-query setting and show that, if the SC loss is sufficiently small, then the predictive belief process induced by the language model satisfies an approximate martingale property. This result formalizes the intuition that matching short-horizon predictions to long-horizon predictions enforces internal self-consistency and stabilizes the model's belief dynamics without requiring inference-time interventions.

\begin{theorem}[Fixed-query martingale residual bound induced by SC loss]
\label{thm:cid_fixed_query_appendix}
Fix an initial context $\mathbf{x}_Q$ and let $(A_n)_{n\ge 1}$ be the sequence of answers generated by a language model $p_\phi$
when repeatedly prompted with $\mathbf{x}_Q$.
Let $\mathcal F_n := \sigma(A_{1:n})$ denote the filtration generated by the answer history.
For $k\ge 1$, define
\[
p_{\phi,n}^{(k)}(\cdot)
\;:=\;
\mathbb P_\phi(A_{n+k}\in\cdot \mid \mathcal F_n, \mathbf{x}_Q).
\]
For any bounded test function $f:\mathcal A\to\mathbb R$, define
\[
\Delta^f_{n,k}(\mathbf{x}_Q)
\;:=\;
\mathbb E_\phi\!\big[f(A_{n+k})-f(A_{n+1})\mid \mathcal F_n,\mathbf{x}_Q\big].
\]

Assume that:
\begin{enumerate}[label=(A\arabic*)]
\item $\hat p^{(k)}_{\phi,n}$ is uniformly consistent at $p^{(k)}_{\phi,n}$ over $n,k$ as the number of rollouts grows.
\item For the trained parameter $\hat\phi$, every horizon pair $(k_1,k_2)$ satisfies
\[
\mathrm{KL}\!\left(\hat p^{(k_2)}_{\hat\phi,n}\,\|\,\hat p^{(k_1)}_{\hat\phi,n}\right)
\le \varepsilon
\quad \textnormal{a.s. for all } n.
\]
\item Every horizon pair $(k_1,k_2)$ appears in the SC loss
with positive probability.
\end{enumerate}
Then, for all $n,k$, we have
\[
\sup_{\|h\|_\infty\le 1,\ \|f\|_\infty\le 1}
\Bigl|
\mathbb E\bigl[h(A_{1:n})\,\Delta^f_{n,k}(\mathbf{x}_Q)\bigr]
\Bigr|
\;\le\;
C\sqrt{\varepsilon},
\]
where $C>0$ is a universal constant.
\end{theorem}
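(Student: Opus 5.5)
The plan is to reduce the residual to a total-variation distance between the true $k$-step and $1$-step predictives, and then control that distance with Pinsker's inequality together with (A1)--(A2).

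\textbf{Step 1 (pointwise reduction).} For fixed $n,k$,
\[
\Delta^f_{n,k}(\mathbf{x}_Q)=\sum_{a\in\mathcal A} f(a)\bigl(p^{(k)}_{\hat\phi,n}(a)-p^{(1)}_{\hat\phi,n}(a)\bigr),
\]
so $|\Delta^f_{n,k}|\le 2\|f\|_\infty\,\mathrm{TV}(p^{(k)}_{\hat\phi,n},p^{(1)}_{\hat\phi,n})$. Since $\|h\|_\infty\le1$, it follows that
\[
\sup_{h,f}\bigl|\mathbb E[h(A_{1:n})\Delta^f_{n,k}]\bigr|\le 2\,\mathbb E\bigl[\mathrm{TV}(p^{(k)}_{\hat\phi,n},p^{(1)}_{\hat\phi,n})\bigr].
\]
In fact the supremum over $h$ turns this into exactly $\mathbb E|\Delta^f_{n,k}|$. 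The case $k=1$ is trivial, so assume $k\ge2$.

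\textbf{Step 2 (Pinsker on the estimators).} The pair $(k_1,k_2)=(1,k)$ has positive probability in the SC loss by (A3), so it is one of the pairs covered by (A2). (A3) is used only to ensure that (A2) applies to every pair one needs.

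\begin{itemize}
\item Pinsker's inequality turns (A2) into $\mathrm{TV}(\hat p^{(k)}_{\hat\phi,n},\hat p^{(1)}_{\hat\phi,n})\le\sqrt{\varepsilon/2}$ almost surely, for all $n$.
\item Pinsker is what produces the $\sqrt{\varepsilon}$ rate.
\item Note that $\hat p^{(1)}=p^{(1)}$ exactly: with $k=1$ there are no intermediate paths, so nothing is averaged.
\end{itemize}

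\textbf{Step 3 (estimator to truth).} Apply the triangle inequality:
\[
\mathrm{TV}(p^{(k)},p^{(1)})\le \mathrm{TV}(p^{(k)},\hat p^{(k)})+\sqrt{\varepsilon/2}.
\]
The remaining task is to kill the estimation term using (A1). Uniform consistency over $n,k$ means that, as the number of rollouts $J\to\infty$, $\sup_{n,k}\mathrm{TV}(\hat p^{(k)}_{\hat\phi,n},p^{(k)}_{\hat\phi,n})\to0$. The target is a clean $C\sqrt\varepsilon$ bound, so the plan is as follows.

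\begin{itemize}
\item First pass to the limit $J\to\infty$ in the bound.
\item The left-hand side of the theorem does not depend on $J$; only $\hat p$ does.
\item Hence $\mathbb E[\mathrm{TV}(p^{(k)},p^{(1)})]\le \limsup_J \mathbb E[\mathrm{TV}(p^{(k)},\hat p^{(k)})]+\sqrt{\varepsilon/2}=\sqrt{\varepsilon/2}$.
\item Bounded convergence is available because TV is at most $1$.
\end{itemize}

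Combining this with Step 1 gives the claim with $C=\sqrt2$.

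\textbf{Main obstacle.} The delicate point is the interpretation of (A1)--(A2) jointly. (A2) is stated for the Monte Carlo estimator at a given $J$, while the limit argument needs it to hold uniformly in $J$, or at least along a sequence $J\to\infty$, for the same $\hat\phi$. I would make this explicit by reading (A2) as holding for all sufficiently large $J$.

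If instead (A1) is only meant in probability, I would replace the limit by a choice of $J$ large enough that the estimation error is at most $\sqrt\varepsilon$. This costs only a change in the universal constant, giving $C=\sqrt2+2$.

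An alternative, if one prefers to work with KL directly, is the route $\mathrm{KL}(p^{(k)}\|p^{(1)})$ via lower semicontinuity of KL under $\hat p^{(k)}\to p^{(k)}$. This gives $\mathrm{KL}(p^{(k)}\|p^{(1)})\le\varepsilon$ in the limit, after which Pinsker applies directly. It is cleaner since $p^{(1)}$ is fixed, and I would try it first.
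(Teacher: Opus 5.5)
Your proposal is correct and is essentially the paper's proof. The paper takes exactly your ``alternative'' route: it asserts that (A1) carries the KL bound of (A2) over to the true predictives (your lower-semicontinuity remark is what actually justifies this), then applies Pinsker, the variational bound $|\Delta^f_{n,k}|\le 2\,\mathrm{TV}$, and $|\mathbb E[h\Delta^f_{n,k}]|\le\mathbb E|\Delta^f_{n,k}|$ to get $C=\sqrt2$. Your main route (Pinsker on the estimators, then the TV triangle inequality with $\hat p^{(1)}=p^{(1)}$ and a $J\to\infty$ limit) is a reordering of the same steps, and it is more explicit than the paper about needing (A2) to hold for the same $\hat\phi$ as $J\to\infty$.
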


\begin{proof}
The SC loss compares horizon-wise predictive distributions via cross-entropy
with a stop-gradient target. Using the identity
\[
H(p,q)=H(p)+\mathrm{KL}(p\|q),
\]
each term of the SC loss reduces, up to an additive constant independent of $\phi$,
to a KL divergence
$\mathrm{KL}(\hat p^{(k_2)}_{\phi,n}\|\hat p^{(k_1)}_{\phi,n})$.

By Assumption (A2), for the trained parameter $\hat\phi$ we have, for all $n$ and all horizon pairs $(k_1,k_2)$,
\[
\mathrm{KL}\!\left(\hat p^{(k_2)}_{\hat\phi,n}\,\|\,\hat p^{(k_1)}_{\hat\phi,n}\right)
\le \varepsilon
\quad \text{a.s.}
\]
By Assumption (A1), uniform consistency implies that the corresponding bound carries over to the true
predictive distributions:
\[
\mathrm{KL}\!\left(p^{(k_2)}_{\hat\phi,n}\,\|\,p^{(k_1)}_{\hat\phi,n}\right)
\le \varepsilon
\quad \text{a.s.}
\]

Fix $n$ and $k$ such that the horizon pair $(n+1,n+k)$ is of interest and appears in the SC loss
with positive probability (Assumption (A3)).
Applying Pinsker's inequality~\citep{pinsker1964information} yields
\[
\big\|p^{(k)}_{\hat\phi,n}-p^{(1)}_{\hat\phi,n}\big\|_{\mathrm{TV}}
\le \sqrt{\tfrac{1}{2}\,
\mathrm{KL}\!\left(p^{(k)}_{\hat\phi,n}\,\|\,p^{(1)}_{\hat\phi,n}\right)}
\le \sqrt{\tfrac{\varepsilon}{2}}
\quad \text{a.s.}
\]

Conditioning on $\mathcal F_n$ and $Q$, the variational characterization of total variation gives,
for any $\|f\|_\infty\le 1$,
\[
|\Delta^f_{n,k}(\mathbf{x}_Q)|
\le
2\,\big\|p^{(k)}_{\hat\phi,n}(\mathbf{x}_Q)-p^{(1)}_{\hat\phi,n}(\mathbf{x}_Q)\big\|_{\mathrm{TV}}
\le
\sqrt{2}\,\sqrt{\varepsilon}
\quad \text{a.s.}
\]
Finally, for any bounded $\mathcal F_n$-measurable $h$ with $\|h\|_\infty\le 1$,
\[
\Bigl|
\mathbb E\bigl[h(A_{1:n})\,\Delta^f_{n,k}(\mathbf{x}_Q)\bigr]
\Bigr|
\le
\mathbb E\bigl[|\Delta^f_{n,k}(\mathbf{x}_Q)|\bigr]
\le
\sqrt{2}\,\sqrt{\varepsilon}.
\]
Taking the supremum over $h$ and $f$ completes the proof.
\end{proof}

Now, we extend the analysis to the generalization setting and study how the martingale property induced by the SC loss transfers to unseen queries. We show that, under standard generalization assumptions, minimizing the empirical SC loss over a finite set of training queries yields approximate martingale behavior in expectation for new queries drawn from the same distribution. This result ensures that SC loss does not merely enforce self-consistency on memorized prompts but instead induces a transferable mechanism for stabilizing belief dynamics.

\begin{theorem}[Generalization of martingale property by SC loss]
\label{thm:mp-generalization}
Let $\mathcal D$ be a distribution over contexts $\mathbf{x}_Q$.
Let $\mathbf{x}_Q^{(1)},\dots,\mathbf{x}_Q^{(m)}\sim\mathcal D$ be i.i.d. training queries, and let $\hat\phi$ be a parameter obtained by minimizing the empirical SC loss.
Let $\widehat{\mathcal L}_{\mathrm{SC}}(\phi)$ denote the empirical SC loss over
the training queries $\mathbf{x}_Q^{(1)},\dots,\mathbf{x}_Q^{(m)}$, i.e.,
\[
\widehat{\mathcal L}_{\mathrm{SC}}(\phi)
\;:=\;
\frac{1}{m}\sum_{i=1}^m
\mathcal L_{\mathrm{SC}}(\phi;\mathbf{x}_Q^{(i)}),
\]
where $\mathcal L_{\mathrm{SC}}(\phi;\mathbf{x}_Q)$ is the per-query SC loss in Definition \ref{def:mploss}
(with the expectation over horizon pairs and rollouts as specified there).
For any query $\mathbf{x}_Q$, define $\mathcal F_n:=\sigma(A_{1:n})$ and
\[
\theta_n(\mathbf{x}_Q):=p_{\hat\phi}(\cdot\mid\mathcal F_n,\mathbf{x}_Q)\in\Delta(\mathcal A).
\]
For any bounded $f:\mathcal A\to\mathbb R$, define
\[
\Delta_{n,k}f(\mathbf{x}_Q)
:=
\mathbb E\!\big[f(A_{n+k})\mid\mathcal F_n,\mathbf{x}_Q\big]
-
\mathbb E\!\big[f(A_{n+1})\mid\mathcal F_n,\mathbf{x}_Q\big].
\]

Assume that:
\begin{enumerate}[label=(A\arabic*)]
\item The empirical predictors $\hat p_{\phi,n}^{(k)}(\mathbf{x}_Q)$ are uniformly consistent at
$p_{\phi,n}^{(k)}(\mathbf{x}_Q)$ over $n,k,\mathbf{x}_Q$ as the number of rollouts grows.
\item There exists a function $\mathrm{Gen}(m,\delta)$
with $\mathrm{Gen}(m,\delta)\to 0$ as $m\to\infty$ such that, with probability at least $1-\delta$
over $\mathbf{x}_Q^{(1)},\dots,\mathbf{x}_Q^{(m)}\sim\mathcal D$,
\[
\mathcal L_{\mathrm{SC}}(\hat\phi)
\;\le\;
\widehat{\mathcal L}_{\mathrm{SC}}(\hat\phi)+\mathrm{Gen}(m,\delta).
\]
\item Every horizon pair $(k_1,k_2)$ appears in the SC loss
with positive probability.
\end{enumerate}
Here, $\mathrm{Gen}(m,\delta)$ denotes a standard generalization bound for the SC loss,
such as one induced by uniform convergence or Rademacher complexity arguments,
which upper-bounds the gap between the empirical and population SC losses and
vanishes as the number of training queries $m$ increases.
Then for any $n,k$, we have
\[
\mathbb E_{Q\sim\mathcal D}\!\left[
\sup_{\substack{\|h\|_\infty\le 1\\ \|f\|_\infty\le 1}}
\Big|
\mathbb E\!\big[h(A_{1:n})\,\Delta_{n,k}f(\mathbf{x}_Q)\,\big|\,\mathbf{x}_Q\big]
\Big|
\right]
\;\le\;
C(k)\sqrt{\widehat{\mathcal L}_{\mathrm{SC}}(\hat\phi)+\mathrm{Gen}(m,\delta)},
\]
where $C(k)>0$ depends on $k$.
\end{theorem}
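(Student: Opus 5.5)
The plan mirrors the proof of \cref{thm:cid_fixed_query_appendix}, with the pointwise assumption (A2) replaced by an averaged bound obtained from the generalization assumption. The chain is: per-query SC loss $\to$ per-query averaged KL $\to$ total variation via Pinsker $\to$ martingale residual $\to$ expectation over $Q$ via Jensen.

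\textbf{Step 1: reduce the loss to KL.} Using $H(p,q)=H(p)+\mathrm{KL}(p\|q)$, the per-query SC loss equals
\[
\mathcal L_{\mathrm{SC}}(\phi;\mathbf{x}_Q)=\mathbb E_{k_1<k_2}\,\mathbb E_{A_{1:n}}\big[\mathrm{KL}(\hat p^{(k_2)}_{\phi,n}\|\hat p^{(k_1)}_{\phi,n})\big]+\mathbb E\big[H(\hat p^{(k_2)}_{\phi,n})\big].
\]
The entropy term is nonnegative. I will therefore work with the KL part and bound it by the full loss, which avoids having to argue about it being an additive constant.

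\textbf{Step 2: isolate the relevant horizon pair.} By (A3), the pair $(1,k)$ has weight $w_k>0$ under the horizon distribution. Hence
\[
\mathbb E_{A_{1:n}}\big[\mathrm{KL}(\hat p^{(k)}_{\hat\phi,n}\|\hat p^{(1)}_{\hat\phi,n})\big]\le w_k^{-1}\,\mathcal L_{\mathrm{SC}}(\hat\phi;\mathbf{x}_Q).
\]
This inequality may need the prefix distribution over $n$ to charge the given $n$ with positive mass, in which case the constant also absorbs that weight. By (A1), letting the number of rollouts grow transfers the bound to the true predictives $p^{(k)},p^{(1)}$. Some continuity of KL is needed here; I will assume the predictives are bounded away from the boundary of the simplex, or else pass to a total-variation statement first, where continuity is trivial.

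\textbf{Step 3: Pinsker, conditionally on $\mathbf{x}_Q$.} Conditionally on $\mathbf{x}_Q$, for $\|f\|_\infty,\|h\|_\infty\le1$,
\[
\big|\mathbb E[h\,\Delta_{n,k}f\mid\mathbf{x}_Q]\big|\le 2\,\mathbb E\big[\|p^{(k)}-p^{(1)}\|_{\mathrm{TV}}\mid\mathbf{x}_Q\big].
\]
Pinsker gives $\|p^{(k)}-p^{(1)}\|_{\mathrm{TV}}\le\sqrt{\mathrm{KL}/2}$. Jensen's inequality for the concave square root then yields
\[
\le \sqrt2\,\sqrt{\mathbb E[\mathrm{KL}\mid\mathbf{x}_Q]}\le \sqrt{2/w_k}\,\sqrt{\mathcal L_{\mathrm{SC}}(\hat\phi;\mathbf{x}_Q)}.
\]
The right-hand side does not depend on $h$ or $f$, so the supremum inside the expectation is controlled.

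\textbf{Step 4: average over queries.} Take $\mathbb E_{Q\sim\mathcal D}$ and apply Jensen again:
\[
\mathbb E_Q\sqrt{\mathcal L_{\mathrm{SC}}(\hat\phi;\mathbf{x}_Q)}\le\sqrt{\mathcal L_{\mathrm{SC}}(\hat\phi)}.
\]
Then (A2) on the event of probability $1-\delta$ gives $\mathcal L_{\mathrm{SC}}(\hat\phi)\le\widehat{\mathcal L}_{\mathrm{SC}}(\hat\phi)+\mathrm{Gen}(m,\delta)$. This yields the claim with $C(k)=\sqrt{2/w_k}$, which explains the dependence of the constant on $k$.

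\textbf{Main obstacle.} The delicate part is Step 2. The SC loss is defined on Monte Carlo estimates, and the transfer to true predictives via (A1) is only asymptotic in the number of rollouts. The cleanest route is to state the bound in the rollout limit, interpreting $\mathcal L_{\mathrm{SC}}$ at its consistent limit. Alternatively, one can add an $o(1)$ term that vanishes by uniform consistency, using the fact that TV is $1$-Lipschitz, so the error enters additively after Pinsker.
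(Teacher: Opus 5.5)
Your proposal is correct and follows essentially the same route as the paper's proof: reduce the cross-entropy to a KL term, isolate the pair $(1,k)$ through its positive weight $\pi(1,k)$ from (A3), pass to the true predictives via (A1), then combine Pinsker, Jensen, the bound $|\Delta_{n,k}f|\le 2\|p^{(k)}_{\hat\phi,n}-p^{(1)}_{\hat\phi,n}\|_{\mathrm{TV}}$ and (A2), which gives the same constant $C(k)=\sqrt{2/\pi(1,k)}$. The differences are refinements in rigor: you bound the KL part by the full loss via nonnegativity of the entropy term instead of calling it an additive constant, you apply Jensen per query before averaging over $Q$, and you flag the prefix-weight and KL-continuity issues that the paper's proof skips.
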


\begin{proof}
Fix $n$ and a horizon $k$ such that the pair $(n+1,n+k)$ is of interest.
Recall that the SC loss (Definition~\ref{def:mploss}) compares horizon-wise predictive
distributions via cross-entropy with a stop-gradient target. Using the decomposition
\[
H(p,q)=H(p)+\mathrm{KL}(p\|q),
\]
each term of the SC loss reduces, up to an additive constant independent of $\phi$,
to a KL divergence between horizon-wise predictors.

Let $\pi$ denote the sampling distribution over horizon pairs used in the SC loss
(Definition~\ref{def:mploss}). By Assumption (A3), $\pi(1,k)>0$.
Write the population SC loss as
\[
\mathcal L_{\mathrm{SC}}(\hat\phi)
\overset{\mathrm{const}}{=}
\mathbb E_{(u,v)\sim\pi}\;
\mathbb E_{\mathbf{x}_Q\sim\mathcal D}\Big[
\mathrm{KL}\!\big(
\hat p_{\hat\phi,n}^{(v)}(\mathbf{x}_Q)\,\big\|\,\hat p_{\hat\phi,n}^{(u)}(\mathbf{x}_Q)
\big)
\Big]
\]
where $\overset{\mathrm{const}}{=}$ denote the equal up to constant.
Since $\mathrm{KL}(\cdot\|\cdot)\ge 0$, we can lower bound the above expectation by
the contribution of the single pair $(u,v)=(1,k)$:
\[
\mathcal L_{\mathrm{SC}}(\hat\phi)
\;\ge\;
\pi(1,k)\;
\mathbb E_{\mathbf{x}_Q \sim \mathcal D}\Big[
\mathrm{KL}\!\big(
\hat p_{\hat\phi,n}^{(k)}(\mathbf{x}_Q)\,\big\|\,\hat p_{\hat\phi,n}^{(1)}(\mathbf{x}_Q)
\big)
\Big].
\]
Rearranging yields
\begin{equation}
\label{eq:cid-kl-from-loss}
\mathbb E_{\mathbf{x}_Q\sim\mathcal D}\Big[
\mathrm{KL}\!\big(
\hat p_{\hat\phi,n}^{(k)}(\mathbf{x}_Q)\,\big\|\,\hat p_{\hat\phi,n}^{(1)}(\mathbf{x}_Q)
\big)
\Big]
\;\le\;
\frac{1}{\pi(1,k)}\,\mathcal L_{\mathrm{SC}}(\hat\phi).
\end{equation}

By Assumption~(A1), the empirical predictors $\hat p_{\hat\phi,n}^{(k)}(\mathbf{x}_Q)$ are uniformly consistent
for the corresponding conditional predictive distributions $p_{\hat\phi,n}^{(k)}(\mathbf{x}_Q)$ as rollouts grow.
Thus, passing to the large-rollout limit in \cref{eq:cid-kl-from-loss} yields the analogous bound for the
true horizon predictors:
\begin{equation}
\label{eq:true-kl-from-loss}
\mathbb E_{\mathbf{x}_Q\sim\mathcal D}\Big[
\mathrm{KL}\!\big(
p_{\hat\phi,n}^{(k)}(\mathbf{x}_Q)\,\big\|\,p_{\hat\phi,n}^{(1)}(\mathbf{x}_Q)
\big)
\Big]
\;\le\;
\frac{1}{\pi(1,k)}\,\mathcal L_{\mathrm{SC}}(\hat\phi).
\end{equation}

Next, apply Pinsker's inequality~\citep{pinsker1964information} and Jensen's inequality:
\begin{align}
\label{eq:tv-from-kl}
\mathbb E_{\mathbf{x}_Q\sim\mathcal D}\Big[
\big\|p_{\hat\phi,n}^{(k)}(\mathbf{x}_Q)-p_{\hat\phi,n}^{(1)}(\mathbf{x}_Q)\big\|_{\mathrm{TV}}
\Big]
&\le
\mathbb E_{\mathbf{x}_Q\sim\mathcal D}\Big[
\sqrt{\tfrac12\,\mathrm{KL}\!\big(p_{\hat\phi,n}^{(k)}(\mathbf{x}_Q)\,\big\|\,p_{\hat\phi,n}^{(1)}(\mathbf{x}_Q)\big)}
\Big]\nonumber\\
&\le
\sqrt{\tfrac12\,
\mathbb E_{\mathbf{x}_Q\sim\mathcal D}\Big[
\mathrm{KL}\!\big(p_{\hat\phi,n}^{(k)}(\mathbf{x}_Q)\,\big\|\,p_{\hat\phi,n}^{(1)}(\mathbf{x}_Q)\big)
\Big]}\nonumber\\
&\le
\sqrt{\frac{1}{2\pi(1,k)}\,\mathcal L_{\mathrm{SC}}(\hat\phi)},
\end{align}
where the last inequality uses \cref{eq:true-kl-from-loss}.

Now fix $Q$ and condition on $\mathcal F_n$.
By the variational characterization of total variation~\citep{le2000asymptotics},
for any bounded $f$ with $\|f\|_\infty\le 1$,
\[
\big|\Delta_{n,k}f(\mathbf{x}_Q)\big|
=
\Big|
\mathbb E\!\big[f(A_{n+k})\mid\mathcal F_n,\mathbf{x}_Q\big]
-
\mathbb E\!\big[f(A_{n+1})\mid\mathcal F_n,\mathbf{x}_Q\big]
\Big|
\le
2\,\big\|p_{\hat\phi,n}^{(k)}(\mathbf{x}_Q)-p_{\hat\phi,n}^{(1)}(\mathbf{x}_Q)\big\|_{\mathrm{TV}}.
\]
Let $h=h(A_{1:n})$ be any bounded $\mathcal F_n$-measurable function with $\|h\|_\infty\le 1$.
Then
\[
\Big|\mathbb E\!\big[h(A_{1:n})\,\Delta_{n,k}f(\mathbf{x}_Q)\,\big|\,\mathbf{x}_Q\big]\Big|
\le
\mathbb E\!\big[\,|\Delta_{n,k}f(\mathbf{x}_Q)|\,\big|\,\mathbf{x}_Q\big].
\]
Taking $\sup_{\|h\|_\infty\le 1}$ and $\sup_{\|f\|_\infty\le 1}$ and then expectation over $Q\sim\mathcal D$,
we obtain
\begin{align}
\label{eq:martingale-residual-tv}
\mathbb E_{\mathbf{x}_Q\sim\mathcal D}\Bigg[
\sup_{\substack{\|h\|_\infty\le 1\\ \|f\|_\infty\le 1}}
\Big|
\mathbb E\!\big[h(A_{1:n})\,\Delta_{n,k}f(\mathbf{x}_Q)\,\big|\,\mathbf{x}_Q\big]
\Big|
\Bigg]
&\le
2\,\mathbb E_{\mathbf{x}_Q\sim\mathcal D}\Big[
\big\|p_{\hat\phi,n}^{(k)}(\mathbf{x}_Q)-p_{\hat\phi,n}^{(1)}(\mathbf{x}_Q)\big\|_{\mathrm{TV}}
\Big]\nonumber\\
&\le
\sqrt{\frac{2}{\pi(1,k)}\,\mathcal L_{\mathrm{SC}}(\hat\phi)},
\end{align}
where the last inequality uses \cref{eq:tv-from-kl}.

Finally, apply the generalization bound in Assumption~(A2):
with probability at least $1-\delta$,
\[
\mathcal L_{\mathrm{SC}}(\hat\phi)
\;\le\;
\widehat{\mathcal L}_{\mathrm{SC}}(\hat\phi)+\mathrm{Gen}(m,\delta).
\]
Substituting this into \cref{eq:martingale-residual-tv} yields the desired result with
\[
C(k) \;=\; \sqrt{\frac{2}{\pi(1,k)}}.
\]
\end{proof}

Now we discuss the bias-correction term in the SC loss. As described in \cref{subsec:martingale_amortization}, we include this term to remove finite-sample bias and ensure that the objective is an unbiased estimator of the target quantity. Corollary~\ref{cor:bias_correct} formalizes this guarantee by showing that, with the proposed correction, the SC loss becomes unbiased.

\begin{corollary}(Bias Correction of SC Loss)
\label{cor:bias_correct}
    Let $\mathbf{x}_Q$ be a fixed query per batch, where batch inputs are denoted as  $\mathcal{B} := \{a_{n+1:n+m}^{(j)}\} \sim p_\phi(\cdot \mid a_{1:n}, \mathbf{x}_Q)$. Then our loss function $\mathcal{L}_{\textrm{SC}}(\phi)$ has a biased gradient estimator, and thus, we correct the bias via additional term:
    \begin{equation*}
        \mathcal{L}(\phi; \mathcal{B}) := \mathcal{L}_{\textrm{SC}}(\phi; \mathcal{B}) + \beta \mathcal{L}_{pg}(\phi; \mathcal{B})
    \end{equation*}
    where $\mathcal{L}(\phi)$ has an unbiased gradient estimator, and the correction loss is defined as:
    \begin{equation*}
        \mathcal{L}_{\textrm{pg}}(\phi; \mathcal{B}) := \textrm{clip}(A, -c_{\textrm{adv}}, -c_{\textrm{adv}}) \overline{\log p_\phi}(a_{n+1:n+m} \mid a_{1:n}, \mathbf{x}_Q)
    \end{equation*}
\end{corollary}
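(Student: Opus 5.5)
The plan is to make precise \emph{which} objective the corrected loss is unbiased for, and then derive the correction from the score-function (log-derivative) identity. The natural target is the expected batch loss
\[
\bar{\mathcal L}(\phi) := \mathbb E_{\mathcal B\sim p_\phi(\cdot\mid a_{1:n},\mathbf{x}_Q)}\bigl[\mathcal L_{\mathrm{SC}}(\phi;\mathcal B)\bigr],
\]
with the number of rollouts $J$ and the horizon pairs $(k_1,k_2)$ held fixed. The key observation is that $\phi$ enters $\bar{\mathcal L}$ in two ways. First, it enters explicitly, through the conditionals $p_\phi(\cdot\mid A^{(j)}_{n+1:n+k-1},a_{1:n},Q)$ that are averaged into $\hat p^{(k_1)}_{\phi,n}$ (the $\hat p^{(k_2)}$ target is under \texttt{stopgrad}). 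Second, it enters implicitly, through the sampling law of the rollouts $A^{(j)}$. Backpropagating through $\mathcal L_{\mathrm{SC}}(\phi;\mathcal B)$ on a sampled batch captures only the first dependence. Its expectation is therefore $\mathbb E_{\mathcal B}[\nabla_\phi \mathcal L_{\mathrm{SC}}(\phi;\mathcal B)] \neq \nabla_\phi\bar{\mathcal L}(\phi)$ in general, and I will exhibit this by writing out the missing term. This establishes the first claim, that the plain SC loss has a biased gradient estimator.

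Next I will apply the log-derivative trick. Since the rollouts are drawn independently, the batch density factorizes as $p_\phi(\mathcal B)=\prod_j p_\phi(a^{(j)}_{n+1:n+m}\mid a_{1:n},\mathbf{x}_Q)$. Differentiating under the integral (legitimate since $\mathcal A$ is finite and the integrand is smooth in $\phi$) gives
\[
\nabla_\phi \bar{\mathcal L}(\phi)=\mathbb E_{\mathcal B}\Bigl[\nabla_\phi\mathcal L_{\mathrm{SC}}(\phi;\mathcal B)+\mathcal L_{\mathrm{SC}}(\phi;\mathcal B)\sum_{j}\nabla_\phi\log p_\phi(a^{(j)}_{n+1:n+m}\mid a_{1:n},\mathbf{x}_Q)\Bigr].
\]
The second term is exactly the gradient of a surrogate of the form $A\cdot\sum_j\log p_\phi(a^{(j)})$ in which the advantage $A$ is treated as a constant: take $A:=\texttt{stopgrad}(\mathcal L_{\mathrm{SC}}(\phi;\mathcal B)-b)$ for any baseline $b$ not depending on $\mathcal B$. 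Subtracting $b$ leaves the expectation unchanged, because $\mathbb E_{\mathcal B}[\nabla_\phi\log p_\phi(\mathcal B)]=0$. Writing $\sum_j$ as $J$ times the average $\overline{\log p_\phi}$ identifies $\beta=J$ (or $\beta=1$ with the factor absorbed into $A$) as the choice that makes $\nabla_\phi[\mathcal L_{\mathrm{SC}}+\beta\mathcal L_{\mathrm{pg}}]$ unbiased for $\nabla_\phi\bar{\mathcal L}$.

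The main obstacle is the clipping $\mathrm{clip}(A,\cdot,\cdot)$. Clipping is nonlinear, so unbiasedness only holds exactly when the clip is inactive, i.e.\ when $|A|\le c_{\mathrm{adv}}$ almost surely. This is automatic for a sufficiently large $c_{\mathrm{adv}}$ whenever the cross-entropy is bounded, which requires the predictive probabilities to be bounded away from $0$. I would state the corollary under that condition and remark that otherwise clipping trades a small bias for variance reduction. I would also note explicitly that the target is the fixed-$J$ expected loss rather than $H(p^{(k_2)},p^{(k_1)})$ itself: the logarithm of a Monte Carlo average is biased for the logarithm of the true marginal, and that bias is not removed by this correction (it vanishes only as $J\to\infty$, consistent with (A1)).
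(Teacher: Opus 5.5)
Your proposal is correct. It is exactly the score-function (REINFORCE) argument that the paper's proof only gestures at by appeal to policy-gradient bias correction, and you carry out the derivation the paper leaves implicit. Your qualifications are needed rather than pedantic: as literally written, the corollary leaves $A$ unspecified and its clip bounds $(-c_{\mathrm{adv}},-c_{\mathrm{adv}})$ would collapse $A$ to a constant whose score term has zero mean, so what is actually provable is your version---$A=\texttt{stopgrad}(\mathcal{L}_{\mathrm{SC}}-b)$, $\beta$ matched to the averaging (e.g.\ $\beta=J$), clipping inactive---with unbiasedness holding for the fixed-$J$ expected batch loss, not for $H(p^{(k_2)},p^{(k_1)})$ itself.
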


\begin{proof}
    The derivation of the bias correction term follows the same logic as policy gradient bias correction~\citep{sutton1999policy}. 
\end{proof}

\clearpage
\newpage

\section{Additional Experiment Details}
\label{appendix:c}

All experiments were implemented using the Hugging Face transformers library~\citep{wolf-etal-2020-transformers}. Inference and training were both conducted on a workstation equipped with two NVIDIA RTX A6000 GPUs (48GB VRAM each). To ensure high numerical precision while optimizing memory throughput, all model weights and activations were loaded in Bfloat16 (BF16) precision.

\subsection{Datasets and Models}

\paragraph{Datasets}
We use CSQA~\citep{talmor2019commonsenseqa}, TinyMMLU~\citep{polo2024tinybenchmarks}, AI2-ARC \citep{clark2018think} for evaluation. We specifically select tasks with more than three multiple-choice options where baselines achieve at least 50\% accuracy; this ensures we avoid degenerate regimes where sampled answer sequences are dominated by near-uniform random noise.

\begin{itemize}
    \item CSQA~\footnote{\href{https://huggingface.co/datasets/tau/commonsense_qa}{https://huggingface.co/datasets/tau/commonsense\_qa}}: A 5-way multiple-choice commonsense reasoning dataset which consists of $9,741$, $1,221$, and $1,141$ number of training, validation, and test data, respectively. We randomly sampled 200 questions from the validation split to ensure access to ground-truth labels for the evaluation.
    \item AI2-ARC-Challenge~\footnote{\href{https://huggingface.co/datasets/allenai/ai2_arc}{https://huggingface.co/datasets/allenai/ai2\_arc}}: We utilize the ``Challenge" subset of AI2-ARC, which contains questions with 4-way multiple choices that simple retrieval and word co-occurrence algorithms fail to solve. This dataset consists of $1,119$, $299$, and $1,172$ number of training, validation, and test data, respectively. We randomly sampled 200 questions from this set for the evaluation.
    \item TinyMMLU~\footnote{\href{https://huggingface.co/datasets/tinyBenchmarks/tinyMMLU}{https://huggingface.co/datasets/tinyBenchmarks/tinyMMLU}}: Part of the TinyBenchmarks suite \citep{polo2024tinybenchmarks}, this dataset enables a reliable LLM evaluation using a compressed subset of MMLU, consisting of 100 questions with 4-way multiple choices. We utilize the full TinyMMLU set for the evaluation.
\end{itemize}

During preprocessing, a small number of samples were excluded if the model failed to follow basic formatting instructions. We applied a strict quality-control filter during preprocessing: any question where more than 50\% of the sampled generation paths failed to adhere to the required response format was excluded from our analysis. This ensures that our evaluation of model reasoning is not confounded by catastrophic instruction-following failures.

\paragraph{Models} 
We evaluate our methods using the instruction-tuned variants of two widely-adopted open-source Large Language Models: \textsc{Llama-3-8B-Instruct}~ \citep{grattafiori2024llama} and \textsc{Olmo-3-7B-Instruct}~\citep{olmo2025olmo}. As discussed in Section \ref{sec:methods}, the use of instruction-tuned weights is critical for ensuring the models adhere to the structured response formats required for automated evaluation.
\begin{itemize}
    \item \textsc{Llama-3.1-8B-Instruct}: \url{https://huggingface.co/meta-llama/Llama-3.1-8B-Instruct}, licensed under Llama 3.1 Community License.~\footnote{\href{https://www.llama.com/llama3_1/license/}{{https://www.llama.com/llama3\_1/license/}}}
    \item \textsc{Olmo-3-7B-Instruct}: \url{https://huggingface.co/allenai/Olmo-3-7B-Instruct}, licensed under Apache license 2.0.~\footnote{\href{https://choosealicense.com/licenses/apache-2.0/}{https://choosealicense.com/licenses/apache-2.0/}}
\end{itemize}

\subsection{Martingale Amortization}

We fine-tune the pretrained models with Fully Sharded Data Parallel (FSDP) and Low-Rank Adaptation~\citep{hu2022lora} on two NVIDIA RTX A6000 GPUs, total 32 effective trajectories of sequences. Each training example corresponds to a single rollout trajectory for a fixed query, and the training set is organized into \emph{groups} by initial context $\mathbf{x}_Q$. For each query($Q$) in the CSQA training split and possible subsequence of rollouts($A_{1:n}$), we generate multiple independent rollout sequences from the pretrained model and store them as separate rows sharing the same group identifier. During training, each optimizer update uses only trajectories from a single group (i.e., a single query) so that the Monte Carlo estimators $\hat{p}_{\phi, n}^{(k)}$ can be formed by averaging over multiple rollouts of the same conditional histories. Total $32$ effective trajectories were used per optimizer update. The following table summarizes the key hyperparameters used in \textsc{Llama-3.1-8B-Instruct} and \textsc{Olmo-3-7B-Instruct}. 

\subsection{Hyperparameters}
\paragraph{Hyperparameters.}
We summarize the hyperparameters used for training with our objective in two tables.
\Cref{tab:common_hyperparameters} reports the settings shared across both backbone models, \textsc{Llama-3.1-8B} and \textsc{Olmo-3-7B}, including the common optimization and training configurations.
In contrast, \Cref{tab:model_hyperparams} lists model-specific hyperparameters that are not shared between the two backbones (e.g., settings adjusted to accommodate differences in architecture, tokenizer, or training stability).
Together, these tables fully specify the hyperparameter choices used in our experiments.

\begin{table*}[h]
\caption{Key hyperparameter values for training}
\centering
\small
\setlength{\tabcolsep}{12pt}
\renewcommand{\arraystretch}{1.15}

\begin{tabular}{l c c c c}
\toprule
\multicolumn{1}{c}{\textbf{Category}} & 
\multicolumn{1}{c}{\textbf{Hyperparameter}} & 
\multicolumn{1}{c}{\textbf{Value}}             \\ 
\midrule
\multirow{4}{*}{LoRA}     &  Rank ($r$)     & 16                \\ 
& Alpha ($\alpha$) & 8 \\
& Dropout &  0.05 \\
& Target Modules & All Linear Layers \\ 
\midrule
\multirow{4}{*}{Optimization}  & Optimizer                           & AdamW             \\ 
& LR Scheduler & Cosine \\
& Weight Decay & 0.01 \\
& Warmup Steps & 5 \\
\midrule 
\multirow{4}{*}{Training}
& Micro Batch Size (per device) &	4 \\
& Gradient Accumulation Steps	& 4 \\
& Mixed Precision & BF16 \\
& Epoch & 2 \\
\bottomrule                               
\end{tabular}%

\label{tab:common_hyperparameters}
\end{table*}

\begin{table*}[h]
\caption{Martingale-amortization training settings for each models}
\centering
\small
\setlength{\tabcolsep}{12pt}
\renewcommand{\arraystretch}{1.15}

\begin{tabular}{l c c c c c}
\toprule
\multicolumn{1}{c}{\textbf{Category}} & 
\multicolumn{1}{c}{\textbf{Hyperparameter}} & 
\multicolumn{1}{c}{\textsc{Llama-3.1-8B-Instruct}} & 
\multicolumn{1}{c}{\textsc{Olmo-3-7B-Instruct}}             \\ 
\midrule
\multirow{3}{*}{Training}
& Learning rate & $2\times10^{-6}$  & $1\times 10^{-4}$ \\
& \# query groups &  20   &     12     \\ 
& Prefix lengths ($\mathcal{N}$) & $\{0, 2, 4, 6, 8\}$ & $\{0, 2\}$ \\
& Optimizer steps per group $U$ & 1 & 4 \\
\midrule
\multirow{2}{*}{$\mathcal{L}_{\mathrm{SC}}$ Computation}
& PG correction weight($\beta$)  & 0.01     & 0.01         \\ 
& Advantage clamp & 5.0 & 5.0 \\
\bottomrule                               
\end{tabular}%

\label{tab:model_hyperparams}
\end{table*}

\subsection{Prompts}

In this section, we summarize the prompts used to generate answer sequences for training and evaluation.
Across all experiments, we primarily use two prompt types:
(i) a \emph{predictive-resampling} prompt that, given a question $Q$ and a fixed seed answer history $a_{1:n}$,
sequentially samples the continuation $A_{n+1:n+k}$; and
(ii) a \emph{seeding} prompt that, given only $Q$, samples an initial seed history $a_{1:n}$.

\Cref{fig:llama_ppr_prompt} and \Cref{fig:ppr_prompt_olmo} show the predictive-resampling prompts
used for \textsc{Llama-3.1-8B} and \textsc{Olmo-3-7B}, respectively, which condition on $(Q,a_{1:n})$
and generate $A_{n+1:n+k}$ in a sequential manner.
In contrast, \Cref{fig:direct_prompt} presents the seeding prompt used to sample $a_{1:n}$,
which is shared across all models and datasets for consistency.

\begin{figure*}[h]
\centering
\begin{tcolorbox}[colback=gray!2, colframe=darkgray!90,
    title=(\textsc{Llama-3.1-8B}) PPR Prompt ,
    fonttitle=\bfseries, width=\textwidth]
    \begin{Verbatim}[fontsize=\scriptsize]
## Task
You are an expert in multiple-choice QA.
Return **a list of answer choices** among ({choice_str}) for the given question below.

### Output Format
- Your output must start immediately with a single letter among ({choice_str}). 
- Your seperator is a single newline(\n). 
- \n must be appear **only once** after the each choice. 
- Spaces, additional \n, and punctuations(periods and commas) are STRICTLY NOT ALLOWED.
- You must output total 100 letters. 
- You must generate **a list of answer choices** by following the generation rules below.

### Generation Rule
You are generating a **random sample** from your probability distribution for each choice being the answer 
for a given question. 
Follow the steps. 
(STEP 1) First, assign probability weight on each choice being an answer. 
        - If a choice is likely to be an answer, it must have higher probability. 
        - Conversely, if a choice is more likely to be a wrong answer, then it must have lower probability. 
        - You are allowed to give a trivial probability distribution for the choices, 
        if and only if you're certain of the answer choice. 
        (i.e. multinomial distribution on {num_choices}-dim answer choices.)

(STEP 2) Write down each line.
        Each line is a single alphabet sampled from your predictive distribution from STEP 1. 
        (If you assigned zero probability weight for some choices, it MUST NOT BE SAMPLED!)
        - Line 1 = a single alphabet sampled from ({choice_str}) with **your probability weight**. 
        - Line $i$ ($2 <= i <= 100$) = a single alphabet independently sampled from ({choice_str}) with 
        **your probability weight**. 
        - You must not condition on your previous (Line 1 ~ $i-1$) answers. 
        Your answer must be an i.i.d. sample from your distribution.

#### Generation examples
* If you think D is an answer for given question with 100% probability, then output might be : 
D\nD\nD\nD\nD\nD\n ... 
* If you think B is the most plausible answer, but E can might also be an answer with small probability, 
then output might be: B\nB\nB\nB\nE\nB\nB ...
* If you think either both A or C can be an answer with high probability, and the others(B, D, E, etc.) cannot 
be the answer, then output might be : C\nA\nC\nC\nA\nA\n ... or A\nC\nC\nA\nA\nC\n ...

Now, generate the 100-line answer list for the given question below.
You must follow the output format and the generating rules!

Question:    
```
\end{Verbatim}
\end{tcolorbox}
\caption{(\textbf{\textsc{Llama-3.1-8B})} Predictive-Resampling Prompts
used for \textsc{Llama-3.1-8B}. Here, we give some generation examples to guide the model to generate according to the predictive distribution while following the answer format.}
\label{fig:llama_ppr_prompt}
\end{figure*}

\begin{figure*}[h]
\centering
\begin{tcolorbox}[colback=gray!2, colframe=darkgray!90,
    title=(\textsc{Olmo-3-7B, Olmo-3.1-32B}) PPR Prompt ,
    fonttitle=\bfseries, width=\textwidth]
    \begin{Verbatim}[fontsize=\scriptsize]
You are a Synthetic Data Generation Expert.

## Task
Below is a multiple-choice question. 
Generate {gen_len} independent responses to the question. 
The responses must reflect your beliefs about the most likely answer distribution.

### Simulation Logic

    1. Analyze: Evaluate the question and its answer choices.
    2. Distribute: Assign a probability weight to each option.
        - If you believe an option is more likely to be correct, assign it a higher probability.
        - Trivial answer choice is only allowed when you are absolutely certain of the answer.
            **Conversely, if you are uncertain, you must assign non-trivial probabilities to multiple choices 
            for the candidates!!**
    3. Generate: Create {gen_len} answers. For each line, decide the answer independently.
        - Samples must be generated independently; you must not condition on your previous answers.
        - You must NOT create repeating or cyclic patterns
            (e.g., ABAB, ABBABB, AABB, etc.).
            If such a pattern emerges, restart the generation internally.
        
Use the following format to respond: 
- Output ONLY the list of answers separated by newlines (e.g., A\nB\nA...) without any explanations. 
- Do not include the count, the probabilities, or any introductory text.

Question:
```
\end{Verbatim}
\end{tcolorbox}
\caption{\textbf{(\textsc{Olmo-3-7B}, \textsc{Olmo-3.1-32B})} Predictive-Resampling Prompts
used for \textsc{Olmo-3-7B}. Here, we design a prompt for \textsc{Olmo-3-7B} that places stronger emphasis on enforcing the exact output format, addressing the model's tendency to occasionally deviate from the required answer formatting.
}
\label{fig:ppr_prompt_olmo}
\end{figure*}

\begin{figure*}[h]
\centering
\begin{tcolorbox}[colback=gray!2, colframe=darkgray!90,
    title= Direct-Query Prompt ,
    fonttitle=\bfseries, width=\textwidth]
    \begin{Verbatim}[fontsize=\scriptsize]
'''
Below is a multiple choice question.
Return only one letter among {", ".join(list(string.ascii_uppercase[:num_choices]))} which corresponds 
to your answer, without any spaces or newlines.
Follow the below format: 
"Answer: [Write your answer choice letter here]"
```
\end{Verbatim}
\end{tcolorbox}
\caption{\textbf{Direct-Query Prompt.} This prompt is used to sample the seed answers. We keep it intentionally simple so that it generalizes reliably across all models and datasets.
}
\label{fig:direct_prompt}
\end{figure*}

\clearpage
\newpage

\subsection{Evaluation Metrics}
\label{appendix:eval_metrics}
In this section, we define the evaluation metrics used throughout the paper.
Let a probabilistic classifier output a categorical distribution
$\bp(\bx) \in [0,1]^K$ over $K$ classes, with $\sum_{k=1}^K p^{(k)}(\bx)=1$.
Given a labeled dataset $\mathcal D=\{(\bx_i,y_i)\}_{i=1}^N$ with $y_i\in\{1,\dots,K\}$,
define the predicted label and confidence as
$\hat y(\bx)=\arg\max_{k} p^{(k)}(\bx)$ and $c(\bx)=\max_k p^{(k)}(\bx)$.
We report the following standard metrics:
\begin{itemize}[leftmargin=*]
    \item \textbf{Accuracy (ACC).}
    \[
    \mathrm{ACC}(\mathcal D)
    := \frac{1}{N}\sum_{i=1}^N \mathbbm{1}\{\hat y(\bx_i)=y_i\}.
    \]

    \item \textbf{Negative log-likelihood (NLL).}
    \[
    \mathrm{NLL}(\mathcal D)
    := \frac{1}{N}\sum_{i=1}^N \bigl(-\log p^{(y_i)}(\bx_i)\bigr).
    \]

    \item \textbf{Brier score~\citep[BS;][]{brier}.}
    Let $\by_i\in\{0,1\}^K$ be the one-hot encoding of $y_i$.
    \[
    \mathrm{BS}(\mathcal D)
    := \frac{1}{N}\sum_{i=1}^N \bigl\|\bp(\bx_i)-\by_i\bigr\|_2^2.
    \]

    \item \textbf{Expected calibration error~\citep[ECE;][]{ece}.}
    Partition the confidence interval $[0,1]$ into $N_{\text{bin}}$ bins
    $I_b=\bigl((b-1)/N_{\text{bin}},\, b/N_{\text{bin}}\bigr]$ for $b=1,\dots,N_{\text{bin}}$,
    and let $B_b=\{i:\ c(\bx_i)\in I_b\}$ with $n_b=|B_b|$.
    Define the bin accuracy and mean confidence as
    \[
    \mathrm{acc}(B_b):=\frac{1}{n_b}\sum_{i\in B_b}\mathbbm{1}\{\hat y(\bx_i)=y_i\},
    \qquad
    \mathrm{conf}(B_b):=\frac{1}{n_b}\sum_{i\in B_b} c(\bx_i),
    \]
    (with the convention that empty bins contribute $0$).
    Then
    \[
    \mathrm{ECE}(\mathcal D, N_{\text{bin}})
    := \sum_{b=1}^{N_{\text{bin}}} \frac{n_b}{N}\,
    \bigl|\mathrm{acc}(B_b)-\mathrm{conf}(B_b)\bigr|.
    \]
    We use $N_{\text{bin}}=15$ throughout.

    \item \textbf{Area under the ROC curve (AUROC).}
    We compute AUROC for \emph{correctness detection} using confidence as a score:
    assign a binary label $t_i=\mathbbm{1}\{\hat y(\bx_i)=y_i\}$ and a score $s_i=c(\bx_i)$.
    AUROC is the probability that a randomly chosen correct example is assigned a higher score
    than a randomly chosen incorrect example:
    \[
    \mathrm{AUROC}
    := \mathbb P\bigl(s^+>s^-\bigr)
    +\tfrac12\,\mathbb P\bigl(s^+=s^-\bigr),
    \]
    where $(s^+,s^-)$ are scores drawn from the sets $\{s_i:t_i=1\}$ and $\{s_i:t_i=0\}$, respectively.
\end{itemize}

\clearpage
\newpage

\section{Additional Experimental Results}
\label{app:exp}
In this section, we provide a comprehensive breakdown of the martingale violation analysis discussed in \cref{sec:exp}. We extend our evaluation of \textsc{PPR} and \textsc{AM-PPR} across the language models on all three MCQA benchmarks (CSQA, AI2-ARC, and TinyMMLU). To rigorously quantify the deviation from the martingale property (i.e., the drift between the immediate predictive distribution and the stabilized future belief), we utilize two complementary distance metrics:\begin{enumerate}\item $\ell_1$ Norm (Total Variation Distance): Measures the absolute difference in probability mass assigned to the options.\item Kullback-Leibler (KL) Divergence: Captures the information-theoretic discrepancy, penalizing cases where the model assigns low probability to the eventual "true" belief.\end{enumerate}

\paragraph{Generalization Across Architectures} The results, visualized in \cref{fig:appendix_drift}, corroborate the findings presented in the main text. We observe that both the two models exhibit a characteristic \textbf{burn-in} phase under PPR, characterized by high initial values of both $\ell_1$ distance and KL divergence. However, applying ``seeding" strategy or our proposed Martingale Amortization (\textsc{AM-PPR}) significantly suppresses this drift. Notably, \textsc{AM-PPR} consistently achieves the lowest divergence values across both metrics, effectively rendering the belief process self-consistent from the very first token. This confirms that the benefits of amortization are not specific to a single architecture but rather address a fundamental transient instability in autoregressive generation. Furthermore, we validate the scalability of this property of seed marginalization by replicating the identical experiment on \textsc{Olmo-3.1-32B} \citep{olmo2025olmo}. As demonstrated in our expanded results, the exact same qualitative benefit holds at a larger scale, underscoring that our stabilization framework robustly scales up to larger language models.

\begin{figure}[h]
\label{fig:appendix_drift}
\centering
\begin{subfigure}[b]{0.48\textwidth}
    \centering
    \includegraphics[width=1.0\linewidth]{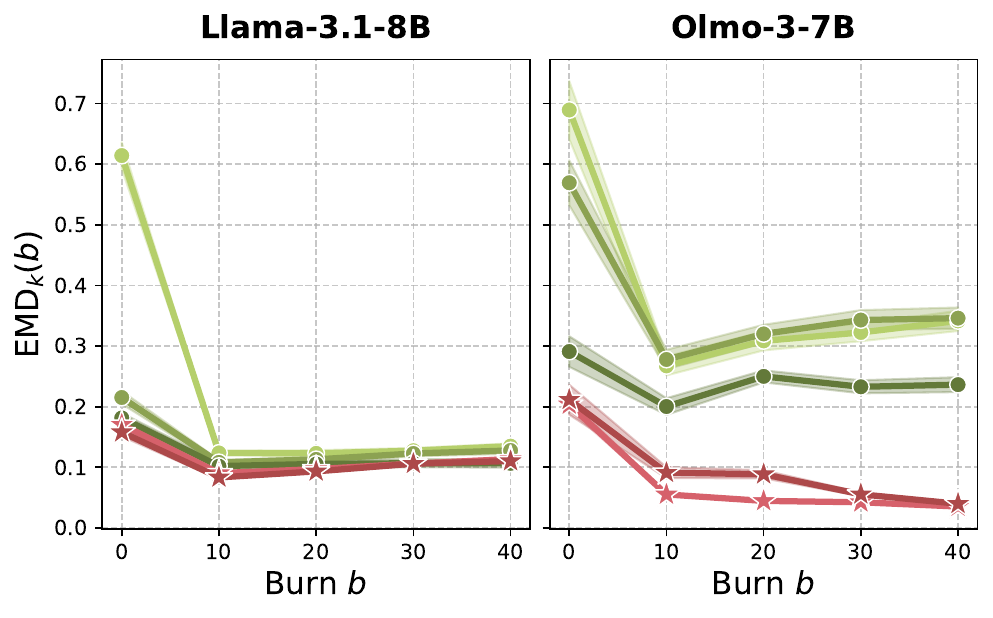}
    \caption{AI2-ARC-challenge}
    \label{fig:additional_figure_1_left}
\end{subfigure}
\hfill
\begin{subfigure}[b]{0.48\textwidth}
    \centering
    \includegraphics[width=1.0\linewidth]{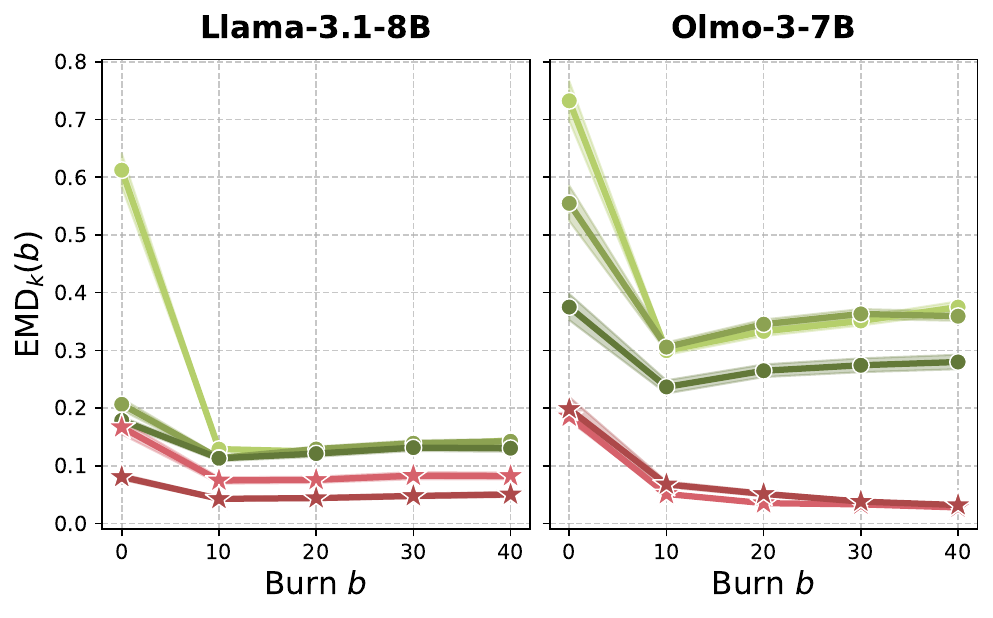}
    \caption{TinyMMLU}
    \label{fig:additional_figure_1_right}
\end{subfigure}
\vspace{0.1cm}
\caption{\textbf{Martingale Property Violation Diagnostics by burn-in steps.} We used L1 distance norm in $\textrm{EMD}_k(b)$ to evaluate how fast the stabilization was achieved in the benchmarks.}
\label{fig:app_emd}
\end{figure}

\begin{figure}[h]
\label{fig:appendix_drift2}
\centering
\begin{subfigure}[b]{0.32\textwidth}
    \centering
    \includegraphics[width=1.0\linewidth]{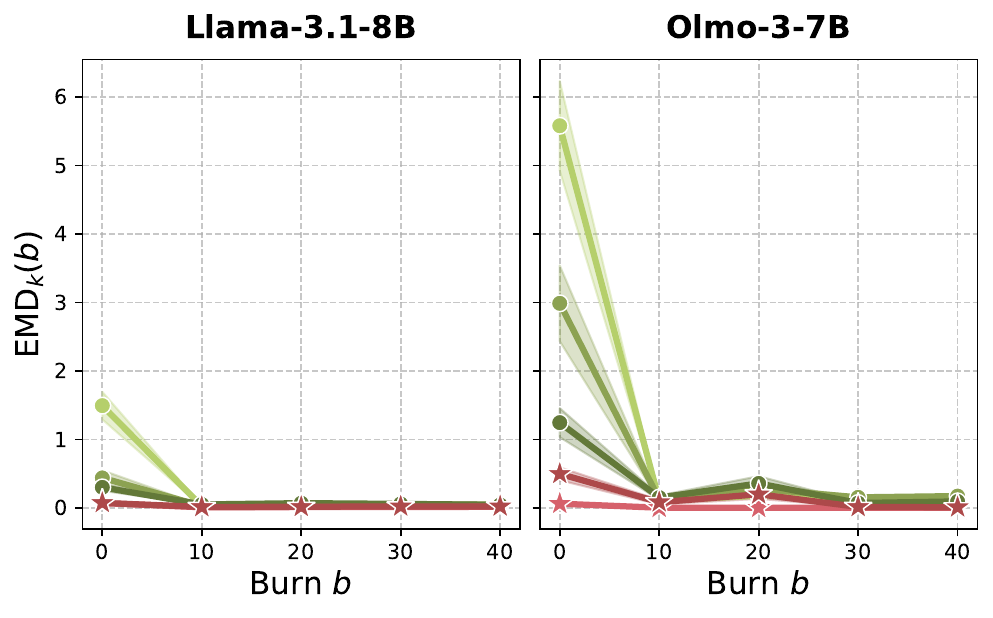}
    \caption{AI2-ARC-challenge}
\end{subfigure}
\hfill
\begin{subfigure}[b]{0.32\textwidth}
    \centering
    \includegraphics[width=1.0\linewidth]{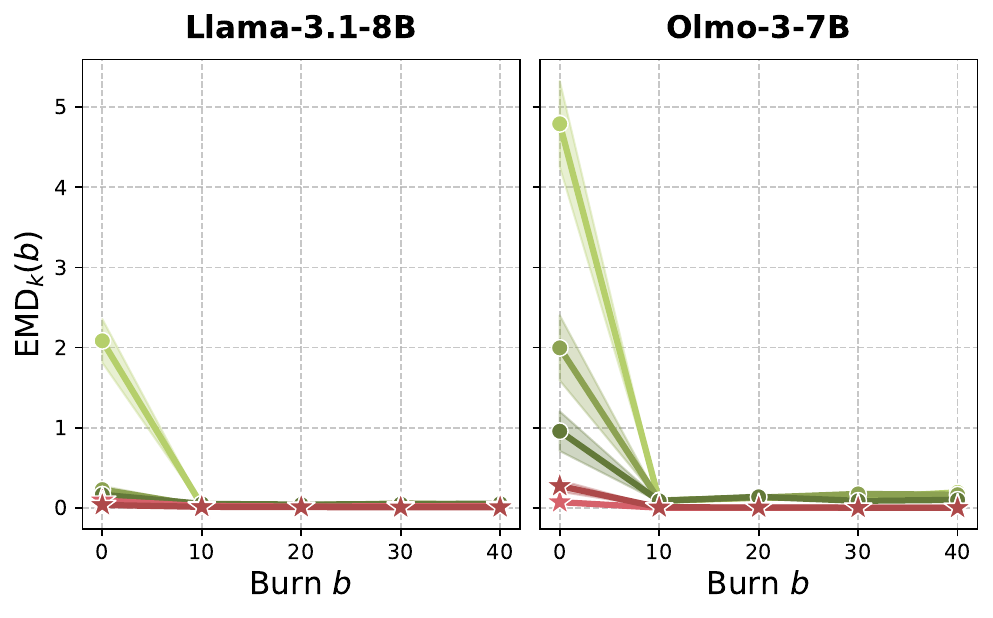}
    \caption{TinyMMLU}
\end{subfigure}
\hfill
\begin{subfigure}[b]{0.32\textwidth}
    \centering
    \includegraphics[width=1.0\linewidth]{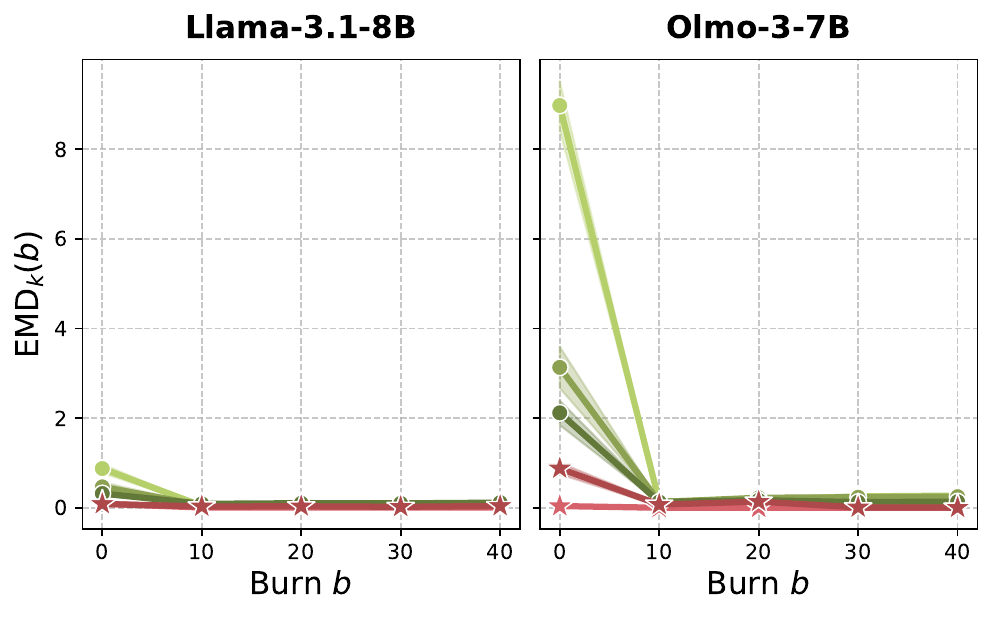}
    \caption{CSQA}
\end{subfigure}
\vspace{0.1cm}
\caption{\textbf{Martingale Property Violation Diagnostics by burn-in steps.} We used KL divergence as metric in $\textrm{EMD}_k(b)$ to evaluate how fast the stabilization was achieved in the benchmarks.}
\label{fig:app_emd2}
\end{figure}

\begin{table*}[h]
\label{tab:olmo-acc}
\caption{Accuracy and AUC score on TinyMMLU, AI2-ARC, CSQA inferred by \textsc{Olmo-3-7B}}
\centering
\small
\setlength{\tabcolsep}{12pt}
\renewcommand{\arraystretch}{1.15}

\begin{tabular}{l c c c c c c}
\toprule
\multicolumn{1}{c}{\multirow{2}{*}{\textbf{Method}}} &
\multicolumn{2}{c}{\textbf{TinyMMLU}} &
\multicolumn{2}{c}{\textbf{AI2-ARC}} &
\multicolumn{2}{c}{\textbf{CSQA}} \\
\cmidrule(lr){2-3}\cmidrule(lr){4-5}\cmidrule(lr){6-7}
 & \textbf{ACC} & \textbf{AUC}
 & \textbf{ACC} & \textbf{AUC}
 & \textbf{ACC} & \textbf{AUC} \\

\midrule
PPR + \texttt{NoSEED}            &   44.0 &   71.8  & 63.8    &85.1    & 57.4      & 86.3      \\
PPR + $\texttt{SEED}_1$         & 49.0       & 72.4   & 67.3     &  86.2 & 60.0       & 87.8  \\
PPR + $\texttt{SEED}_5$        & \textbf{\underline{54.0}} & 75.0 & \textbf{74.4} & 89.9 & 73.0    &  86.3 \\
\midrule
AM-PPR + $\texttt{SEED}_1$
                & 51.0 & 77.6 & 74.0 & 89.3 & 74.0 & 88.4 \\
AM-PPR + $\texttt{SEED}_5$ & 52.0 & \textbf{79.2} & 73.0 & \textbf{91.8} &\textbf{ \underline{77.0}} &\textbf{ 92.7} \\
\midrule
\texttt{Direct-Query}
                & \textbf{52.2 } & \textbf{\underline{82.0}}  & \textbf{\underline{74.5}}   & \textbf{\underline{92.9}}  &  \textbf{75.0} & \textbf{\underline{94.4}}  \\
                
\bottomrule
\end{tabular}

\end{table*}

\begin{table*}[h]
\label{tab:olmo-ece}
\caption{ECE, NLL and Brier Score on TinyMMLU, AI2-ARC, CSQA inferred by \textsc{Olmo-3-7B}}
\centering
\small
\setlength{\tabcolsep}{10pt}
\renewcommand{\arraystretch}{1.15}

\begin{tabular}{l c c c c c c c c c}
\toprule
\multicolumn{1}{c}{\multirow{2}{*}{\textbf{Method}}} &
\multicolumn{3}{c}{\textbf{TinyMMLU}} &
\multicolumn{3}{c}{\textbf{AI2-ARC}} &
\multicolumn{3}{c}{\textbf{CSQA}} \\
\cmidrule(lr){2-4}\cmidrule(lr){5-7}\cmidrule(lr){8-10}
 & \textbf{ECE} & \textbf{NLL} & \textbf{BS} 
& \textbf{ECE} & \textbf{NLL} & \textbf{BS}
& \textbf{ECE} & \textbf{NLL} & \textbf{BS} \\

\midrule
PPR + \texttt{NoSEED}             &   \textbf{\underline{0.0937}} &   1.2443   & 0.6734    & 0.1988   & 1.0476       &  0.5542  & 0.2286 & 1.2568 & 0.8625\\
PPR + $\texttt{SEED}_1$          & 0.1530      & 1.2306   & 0.6670   &  0.2243 & 1.0318    & 0.5427 & 0.2595 & 1.2458& 0.8775\\
PPR + $\texttt{SEED}_5$         & 0.1506  & \textbf{\underline{1.1190}} & \textbf{0.6458} & \textbf{\underline{0.1482}}  & \textbf{0.8277} &  \textbf{0.4160} & 0.2355 & 1.2568 & 0.8625 \\
\midrule
AM-PPR + $\texttt{SEED}_1$
                & \textbf{0.1402}  &\textbf{1.1908} & \textbf{\underline{0.6386}} & \textbf{0.1532} & \textbf{\underline{0.7995}} & \textbf{\underline{0.3916}} & \textbf{\underline{0.1506}}  & \textbf{1.1665} & \textbf{0.4234} \\
AM-PPR+ $\texttt{Steer}_5$
                & 0.2994 & 1.6778 &0.7178& 0.1921 & 1.3286 & 0.4212 & \textbf{0.1587} & 1.7489 & \textbf{\underline{0.3889}} \\
\midrule
\texttt{Direct-Query}
                & 0.3348 & 1.575   & 0.6925   & 0.1942 &  1.1846 & 0.4172 & 0.1836 & \textbf{\underline{1.1512}} & 0.9439 \\
                
\bottomrule
\end{tabular}
\end{table*}

\begin{table*}[h]
\label{tab:olmo-acc-32b}
\caption{Accuracy and AUC score on AI2-ARC, CSQA inferred by \textsc{Olmo-3.1-32B}}
\centering
\small
\setlength{\tabcolsep}{16pt} 
\renewcommand{\arraystretch}{1.15}

\begin{tabular}{l c c c c} 
\toprule
\multicolumn{1}{c}{\multirow{2}{*}{\textbf{Method}}} &
\multicolumn{2}{c}{\textbf{AI2-ARC}} &
\multicolumn{2}{c}{\textbf{CSQA}} \\
\cmidrule(lr){2-3}\cmidrule(lr){4-5} 
 & \textbf{ACC} & \textbf{AUC} & \textbf{ACC} & \textbf{AUC} \\ 

\midrule
PPR + \texttt{NoSEED}            & 0.6316 & 0.8384 & 0.5600 & 0.8304 \\
PPR + $\texttt{SEED}_1$          & \textbf{0.7520} & \textbf{0.8831} & \textbf{0.7220} & \textbf{0.8956} \\
PPR + $\texttt{SEED}_5$          & \textbf{\underline{0.7600}} & \textbf{\underline{0.9141}} & \textbf{\underline{0.7520}} & \textbf{\underline{0.9255}} \\
\bottomrule
\end{tabular}
\end{table*}

\begin{table*}[h]
\label{tab:olmo-ece-32b} 
\caption{ECE, NLL and Brier Score on AI2-ARC, CSQA inferred by \textsc{Olmo-3.1-32B}}
\centering
\small
\setlength{\tabcolsep}{10pt}
\renewcommand{\arraystretch}{1.15}
\begin{tabular}{l c c c c c c}
\toprule
\multicolumn{1}{c}{\multirow{2}{*}{\textbf{Method}}} &
\multicolumn{3}{c}{\textbf{AI2-ARC}} &
\multicolumn{3}{c}{\textbf{CSQA}} \\
\cmidrule(lr){2-4}\cmidrule(lr){5-7}
 & \textbf{ECE} & \textbf{NLL} & \textbf{BS}
 & \textbf{ECE} & \textbf{NLL} & \textbf{BS} \\
\midrule
PPR + \texttt{NoSEED}   & 0.2241 & 4.6113 & 0.5657 & 0.2812 & 5.2836 & 0.6659 \\
PPR + $\texttt{SEED}_1$ & \textbf{0.1750} & \textbf{\underline{1.1509}} & \textbf{0.4424} & \textbf{0.1822} & \textbf{\underline{1.2860}} & \textbf{0.4826} \\
PPR + $\texttt{SEED}_5$ & \textbf{\underline{0.1455}} & \textbf{1.6914} & \textbf{\underline{0.3834}} & \textbf{\underline{0.1125}} & \textbf{1.3753} & \textbf{\underline{0.3947}} \\

\bottomrule
\end{tabular}
\end{table*}

\clearpage
\newpage 
\section{Extension to Richer Reasoning Trajectories}
\label{app:future}

The primary experiments focus on the direct-answer MCQA setting where the predictive belief can be explicitly represented and exactly measured on a finite answer simplex. Since core object of our framework is a sequential predictive process, extending this perspective to richer, open-ended reasoning settings is very natural. To provide exploratory empirical evidence supporting our future work proposal, we extend our formulation to complex reasoning paradigms through the lens of In-Context Learning (ICL) in this section.

\paragraph{Problem Formulation}
For example, in a Chain-of-Thought (CoT) generation setting, our discrete formulation setting can be extended to a continuous setting by projecting open-ended reasoning paths into a continuous embedding space $\mathcal{H}$. Let $R_n = (t_n, a_n)$ denote the $n$-th sequentially generated thought--answer pair with its embedding representation denoted as $H_n = \psi(R_n) \in \mathcal{H}$. In this open-ended regime, we extend our primary object of interest to a history-dependent probability measure over the representation space:
\[
\mu_n(\mathbf{x}_Q) := p_\phi(H_{n+1} = \cdot \mid H_{1:n}, \mathbf{x}_Q) \in \mathcal{P}(\mathcal{H}).
\]

As $n$ increases, $\mu_n$ traces a stochastic trajectory driven by the model's own generated reasoning paths. Formally, the approximate martingale property translates to verifying whether the future representational belief remains conditionally stable for an appropriate class of bounded test functions $f$ with respect to the filtration $\mathcal{F}_n = \sigma(\mathbf{x}_Q, H_{1:n})$:
\[
    \mathbb{E}\left[\int f \, d\mu_{n+k} \;\middle|\; \mathcal{F}_n, \mathbf{x}_Q \right] \approx \int f \, d\mu_n.
\]

To modulate this continuous predictive process, we naturally extend the direct-query seeding strategy to the continuous domain by modifying the initial context $\mathbf{x}_Q$. Our seeding strategy injects a dynamic history of randomly sampled thought--answer pairs, $S_{1:m} \overset{\text{iid}}{\sim} p_\phi(\cdot \mid Q)$. Crucially, while this initial generation process is essentially identical to standard self-consistency sampling \citep{wang2022self}, our framework uniquely repurposes these samples as an explicit seeding context within $\mathbf{x}_Q$. Rather than serving as static task demonstrations, these randomized pairs leverage the In-Context Learning (ICL) mechanism to modulate the autoregressive context, effectively conditioning and shifting the model's internal activation states within the continuous embedding space $\mathcal{H}$.

\paragraph{Martingale Consistency}

To evaluate this exploratory proposal and anchor our directions for future work, we conducted a set of preliminary experiments using the larger \textsc{Olmo-3.1-32B} model on complex reasoning benchmarks, namely GSM8K \citep{cobbe2021training} and GPQA \citep{rein2023gpqa}. The PPR prompts used in this experiment can be found in \cref{fig:gsm8k_prompt} and \cref{fig:gpqa_prompt}. To handle the non-discrete nature of the embedding space $\mathcal{H}$, we tracked the alignment of the sequential centroid vectors $\bar{h}_n(\mathbf{x}_Q) := \mathbb{E}_{H \sim \mu_n}[H_n \mid \mathbf{x}_Q]$ of the reasoning paths, which serves as an empirical proxy to diagnose martingale-style consistency. Practically, $\bar{h}_n$ is computed as the empirical mean of the hidden representations across multiple independent rollouts for each query.

Mirroring the qualitative insights from our MCQA experiments, \cref{fig:app_reasoning} shows that the trajectories exhibit substantial representational drift in the initial steps but becomes increasingly consistent as the sample length grew. Furthermore, when applying PPR with our seeding strategy, we observed significantly faster convergence of this consistency pattern. These results indicate that similar transient belief instabilities arise in more complex scenarios with larger model scales, and designing an appropriate amortized training objective to stabilize the stochastic thinking paths of language models offers a highly promising avenue for future research.

\clearpage
\newpage 
\begin{figure*}[h]
    \centering\includegraphics[width=0.3\textwidth]{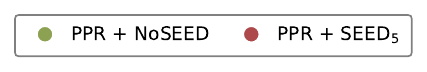}
\end{figure*}

\begin{figure}[h]
\centering
\begin{subfigure}[b]{0.7\textwidth}
    \centering
    \includegraphics[width=1.0\linewidth]{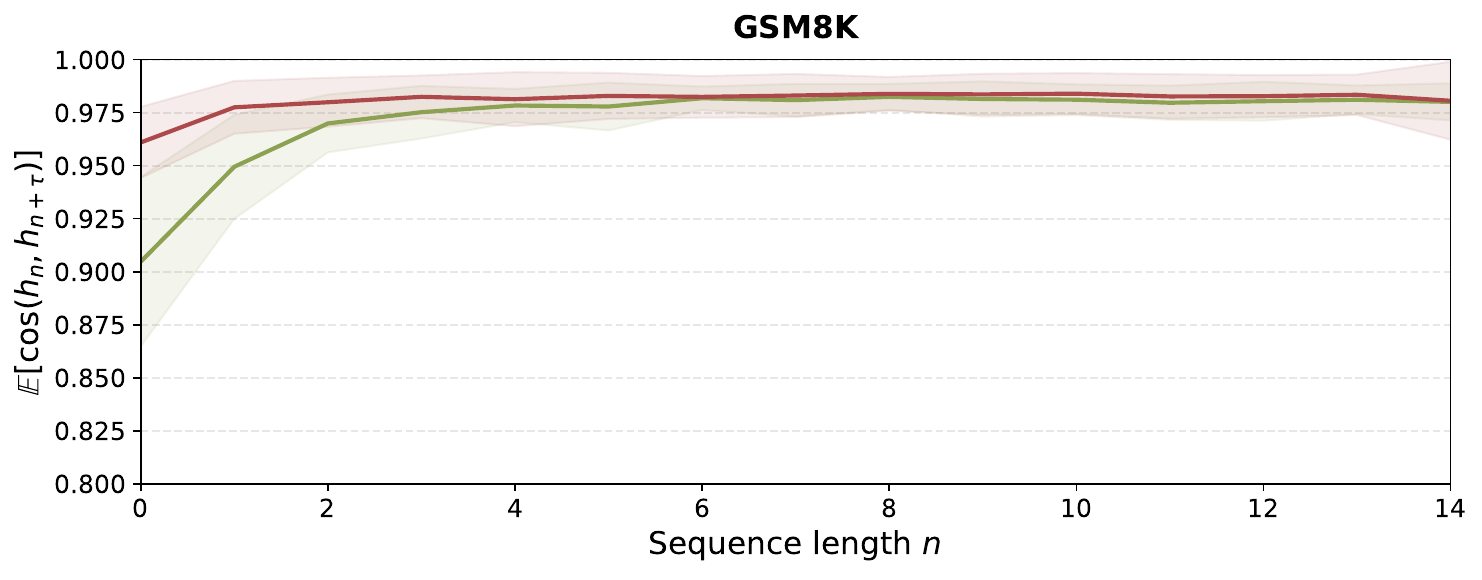}
    \caption{GSM8K}
    \label{fig:additional_figure_3_left}
\end{subfigure}
\\ \vspace{0.4cm} 
\begin{subfigure}[b]{0.7\textwidth}
    \centering
    \includegraphics[width=1.0\linewidth]{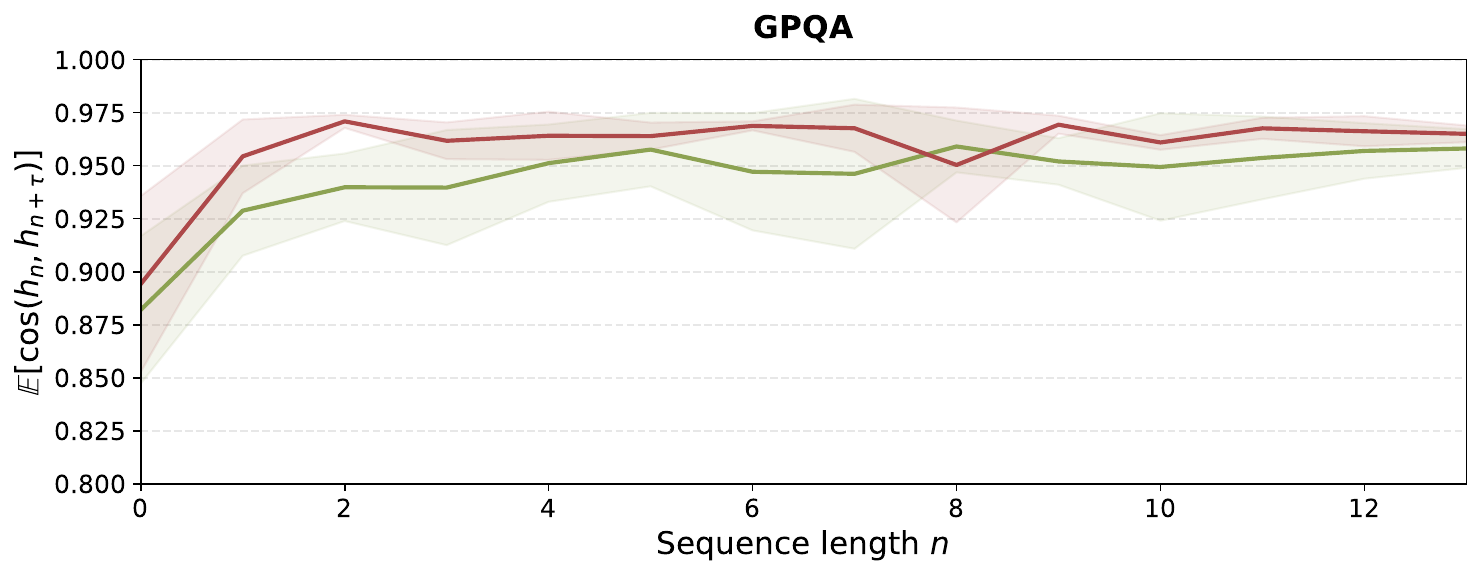}
    \caption{GPQA}
    \label{fig:additional_figure_3_right}
\end{subfigure}
\vspace{0.2cm}
\caption{\textbf{Analysis of martingale property violation in prompted predictive resampling applied to reasoning paths.} We instructed \textsc{Olmo-3.1-32B} to generate a sequence of thought--answer pairs on questions from GSM8K and GPQA. Solid lines indicate the mean cosine similarity between the sequential centroids $\bar{h}_n$ and $\bar{h}_{n+\tau}$ ($\tau = 5$) over 50 questions from each dataset. The shaded region indicates $\pm1$ standard deviation over the questions. A lower similarity in the initial steps directly captures the representational drift, indicating a transient violation of the martingale property, while reasoning seeding accelerate the stabilization of marginal distributions.}
\label{fig:app_reasoning} 
\end{figure}

\clearpage
\newpage

\begin{figure*}[h]
\centering
\begin{tcolorbox}[colback=gray!2, colframe=darkgray!90,
    title= (\textsc{GSM8K}) PPR Prompt ,
    fonttitle=\bfseries, width=\textwidth]
    \begin{Verbatim}[fontsize=\scriptsize]
'''
You are an expert in solving math problems.

## Task
Below is a math problem.
Generate {num_lines} **independent** pairs of (solution, answer).

## Hard Constraint
- Provide **reasonable** solution and answer pairs for each lines **independently**.
- You must NOT cheat on the previous solutions or answers which you have already generated.
- You are NOT allowed to duplicate or omit thinking lines by "Same solution." or "Same answer." of the previous 
solutions or answers.

## Output format
Each pair:
<sol>...</sol>
<ans>INTEGER</ans>
Separate pairs with blank lines. No extra commentary outside pairs.
You must generate total {num_lines} pairs.

Question:
```
\end{Verbatim}
\end{tcolorbox}
\caption{(\textbf{GSM8K}) Predictive-Resampling Prompt used for GSM8K. Here, we design a prompt for \textsc{Olmo-3.1-32B} to generate pairs of thought--answer for given GSM8K question. 
}
\label{fig:gsm8k_prompt}
\end{figure*}

\begin{figure*}[h]
\centering
\begin{tcolorbox}[colback=gray!2, colframe=darkgray!90,
    title= (\textsc{GPQA}) PPR Prompt ,
    fonttitle=\bfseries, width=\textwidth]
    \begin{Verbatim}[fontsize=\scriptsize]
'''
You are an expert in solving graduate level science problems.

## Task
Below is a graduate level science problem with multiple-choice answers (A~D).
Generate {num_lines} **independent** pairs of (solution, answer).

## Hard Constraint
- Provide **reasonable** solution and answer pairs for each line **independently**.
- You must NOT cheat on the previous solutions or answers which you have already generated.
- You are NOT allowed to duplicate or omit thinking lines by "Same solution." or "Same answer." of the previous 
solutions or answers.

## Output format
Each pair:
<sol>...</sol>
<ans>...</ans>
Separate pairs with blank lines. No extra commentary outside pairs.
You must generate total {num_lines} pairs.

Question:
```
\end{Verbatim}
\end{tcolorbox}
\caption{(\textbf{GPQA}) Predictive-Resampling Prompt used for GPQA. Here, we design a prompt for \textsc{Olmo-3.1-32B} to generate pairs of thought--answer for given GPQA question. 
}
\label{fig:gpqa_prompt}
\end{figure*}

\end{document}